\documentclass[10pt,twocolumn,letterpaper]{article}

\usepackage[pagenumbers]{cvpr} % To force page numbers, e.g. for an arXiv version

\definecolor{cvprblue}{rgb}{0.21,0.49,0.74}
\usepackage[pagebackref,breaklinks,colorlinks,allcolors=cvprblue]{hyperref}
\usepackage{url}

\usepackage{booktabs}       % professional-quality tables
\usepackage{amsfonts}       % blackboard math symbols
\usepackage{nicefrac}       % compact symbols for 1/2, etc.
\usepackage{microtype}      % microtypography
\usepackage{xcolor}         % colors
\usepackage{natbib}         % references
\usepackage{makecell}       % table
\usepackage{multirow}       % table
\usepackage{amsmath}        % math
\usepackage{mathtools}
\usepackage{graphicx}       % figure
\usepackage{subcaption}  % Required for creating subfigures
\usepackage{color}
\usepackage[normalem]{ulem}
\usepackage{tikz}
\usepackage{bbding}
\usepackage{pifont}
\usepackage{colortbl}
\usepackage{bm}
\usepackage{xspace}

\usepackage{ragged2e}
\usepackage{amssymb}
\usepackage{rotating}
\usepackage{array}
\usepackage{times}
\usepackage{psfrag}
\usepackage{enumitem}
\usepackage{lscape}
\usepackage{verbatim}
\usepackage{overpic}
\usepackage{amsthm}
\usepackage{tablefootnote}
\usepackage{hhline}

\usepackage[ruled,linesnumbered]{algorithm2e}
\SetKwComment{Comment}{// }{}

\newtheorem{theorem}{Theorem}

\usepackage[misc]{ifsym}

\DeclareMathOperator{\Var}{Var}
\DeclareMathOperator{\Cov}{Cov}

\def\paperID{***} % *** Enter the Paper ID here
\def\confName{CVPR}
\def\confYear{2026}

\title{BinaryAttention: One-Bit QK-Attention for Vision and Diffusion Transformers}

\author{Chaodong Xiao$^{1,2}$, Zhengqiang Zhang$^{1,2}$, Lei Zhang$^{1,2,}$\thanks{Corresponding author. This research is supported by the PolyU-OPPO Joint Innovative Research Center.}\\
{$^{1}$The Hong Kong Polytechnic University \qquad $^{2}$OPPO Research Institute} \\
{\tt\small chaodong.xiao@connect.polyu.hk, cslzhang@comp.polyu.edu.hk}
}

\begin{document}
\maketitle

\begin{abstract}
Transformers have achieved widespread and remarkable success, while the computational complexity of their attention modules remains a major bottleneck for vision tasks. 
Existing methods mainly employ 8-bit or 4-bit quantization to balance efficiency and accuracy. 
In this paper, with theoretical justification, we indicate that binarization of attention preserves the essential similarity relationships, and propose \textbf{BinaryAttention}, an effective method for fast and accurate 1-bit qk-attention. 
Specifically, we retain only the sign of queries and keys in computing the attention, and replace the floating dot products with bit-wise operations, significantly reducing the computational cost. 
We mitigate the inherent information loss under 1-bit quantization by incorporating a learnable bias, and enable end-to-end acceleration. 
To maintain the accuracy of attention, we adopt quantization-aware training and self-distillation techniques, mitigating quantization errors while ensuring sign-aligned similarity. 
BinaryAttention is more than \textbf{$2\times$ faster} than FlashAttention2 on A100 GPUs.
Extensive experiments on vision transformer and diffusion transformer benchmarks demonstrate that BinaryAttention matches or even exceeds full-precision attention, validating its effectiveness. 
Our work provides a highly efficient and effective alternative to full-precision attention, pushing the frontier of low-bit vision and diffusion transformers. 
The codes and models can be found at \url{https://github.com/EdwardChasel/BinaryAttention}.

\end{abstract}

\section{Introduction}
\label{sec:intro}

Transformers \cite{vaswani2017attention} have made great breakthroughs in different fields, from natural language processing \cite{devlin2019bert, raffel2020exploring, brown2020language, chowdhery2023palm, touvron2023llama, team2024gemini, guo2025deepseek}, to vision tasks \cite{dosovitskiy2021an, liu2021swin, cheng2022masked, he2022masked, peebles2023scalable, ravi2024sam, tao2025instantcharacter}, and to multimodal fundamental models \cite{radford2021learning, li2023blip, liu2023visual, achiam2023gpt, li2024llava, wang2024qwen2}, largely due to the expressivity of attention mechanism. Despite the remarkable progress, this success comes with a cost: standard attention scales quadratically with sequence length, creating  extraordinary computational resources demand for long-context and high-resolution tasks. 
To alleviate this bottleneck, significant efforts \cite{qi2021accelerating, tay2022efficienttransformerssurvey, tang2024survey, gholami2022survey, dao2022flashattention, gu2024mamba} have been dedicated to accelerating Transformers. These approaches can be broadly grouped into the \textit{architecture optimization}, \textit{model quantization}, and \textit{hardware optimization} categories.%, as illustrated in Fig. 1.

\begin{figure}[!t]
    \centering
    \includegraphics[width=\columnwidth]{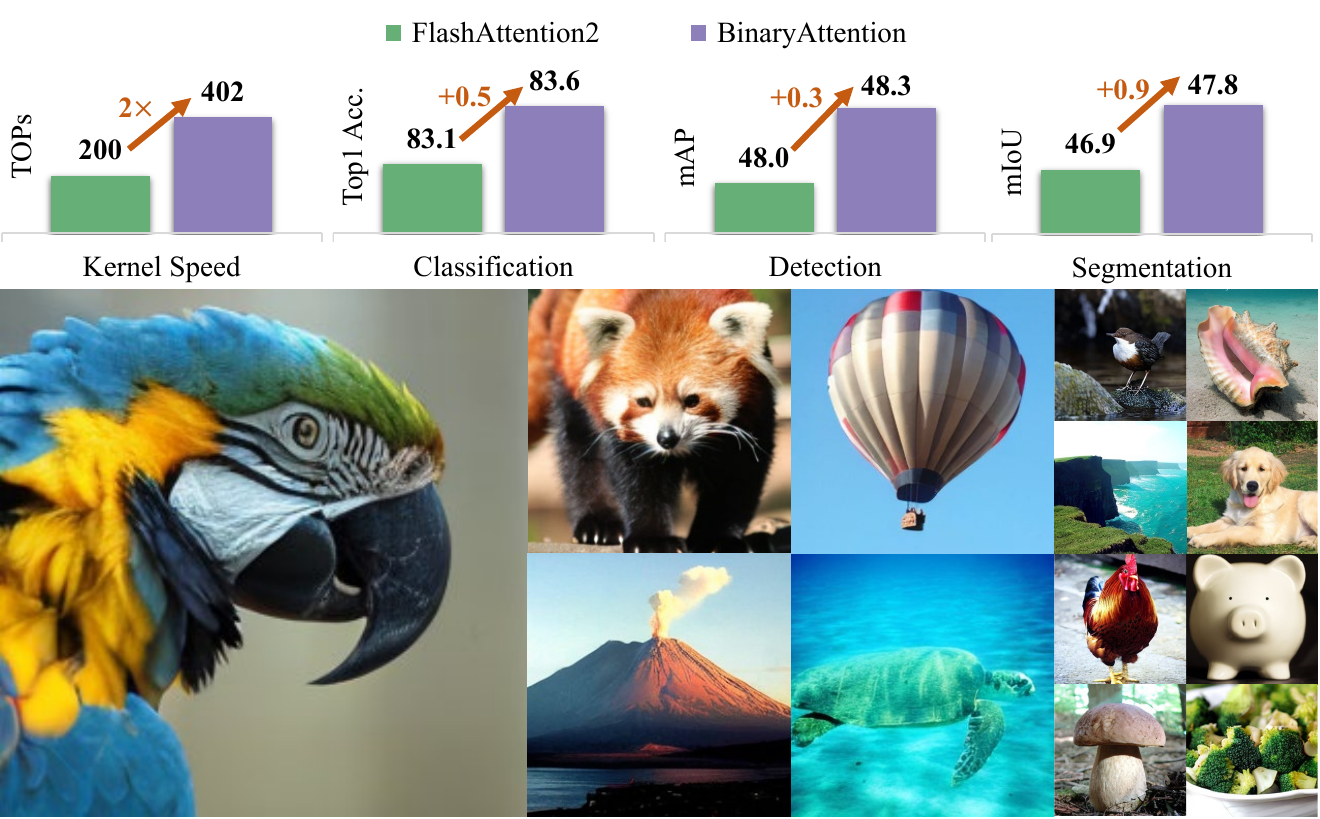}
    % \vspace{-7mm}
    \caption{\textbf{Top}: Performance comparison between FlashAttention2 and BinaryAttention on vision tasks. \textbf{Bottom}: Image generation examples by DiT-XL/2 \cite{peebles2023scalable} driven by BinaryAttention.}
    \label{fig:samples}
    \vspace{-4mm}
\end{figure}

\textit{Architecture optimization} seeks to reduce computational overhead by revising the attention mechanism. Linear attention \cite{katharopoulos2020transformers, wang2020linformer, yu2022metaformer, arora2024simple, yang2024parallelizing} replaces quadratic dot-product operations with linear complexity kernel computations. Sparse attention \cite{beltagy2020longformer, roy2021efficient, tay2020sparse, xiao2023efficient, gao2024seerattention, yuan2025native} restricts computations to a selected subset of token pairs. More recent state space models \cite{gu2022efficiently, gu2024mamba, mamba2} substitute the standard attention with a recurrent data-dependent selection mechanism. While effective, these methods often struggle to maintain the expressive power of standard attention across diverse models and tasks.

\textit{Model quantization} \cite{jacob2018quantization, chen2020statistical, gholami2022survey} is a principled approach to accelerate training and inference while shrinking memory by reducing numerical precision. The quantization of linear layers has been explored in depth \cite{yang2019quantization, liu2021post, li2022q, frantar2022gptq, liu2023llm, lin2024awq, he2023bivit, le2023binaryvit, xiao2024binaryvit} and is relatively mature. In contrast, recent efforts such as SageAttention  \cite{zhang2025sageattention, zhang2025sageattention2, zhang2025sageattention3} have increasingly focused on the quantization of attention. Unlike linear layers, quantizing attention presents unique challenges due to the dynamic nature and the sensitive softmax normalization. Consequently, these approaches typically employ 8-bit or 4-bit representations (\eg, INT8, FP8, INT4 and FP4) to maintain a practical balance between efficiency and accuracy. However, further reduction of the precision to sub-4-bit levels, especially to binary representations, remains a major hurdle, as the extreme information loss and optimization instability cause an abrupt performance degradation.

\textit{Hardware optimization} leverages specialized hardware designs and kernel optimizations to accelerate Transformers. Breakthroughs such as FlashAttention  \cite{dao2022flashattention, dao2023flashattention2, shah2024flashattention} achieve significant speedups on GPUs without altering the model architecture or sacrificing accuracy. Despite their effectiveness, an alternative optimization focuses on extreme low-precision attention computation. Specifically, matrix multiplications with entries in binary representations can be efficiently implemented on modern hardware \cite{hubara2016binarized}. Therefore, developing an effective and hardware-friendly binary attention mechanism is an imperative demand to push the boundaries of the efficiency of Transformers.

In this paper, we theoretically analyze the feasibility of binary representations in attention computing and indicate that the essential similarity relationships can be preserved even in binary space. Building upon this insight, we present \textbf{BinaryAttention}, a novel quantization method that enables fast and accurate 1-bit qk-attention. We demonstrate the significant potential of 1-bit qk-attention for vision and diffusion transformers, achieving remarkable acceleration without compromising performance, as shown in Fig.~\ref{fig:samples}. 
Specifically, we quantize attention queries and keys to 1-bit representations. This transforms the standard dot-product attention score into a distance-based and a direction-based mechanism, which can be computed with hyper-efficient bit-wise XNOR and popcount instructions. While this can drastically reduce the computational cost, relying solely on the similarity in binary space can cause the attention distribution to become overly uniform or flattened, as it discards the crucial magnitude information from the original tokens. To mitigate this flattened effect, we then introduce a learnable bias term, which is designed to be dense, position-sensitive, or context-aware, allowing expressive and discriminative 1-bit qk-attention. Furthermore, we design a hybrid quantization scheme that applies 8-bit precision to the attention weights and values, enabling end-to-end acceleration. 
Finally, to address the inherent approximation errors and the distribution shift caused by 1-bit quantization, we employ quantization-aware training (QAT) \cite{jacob2018quantization} and self-distillation \cite{hinton2015distilling} techniques to guide the model learning binary representations whose similarity aligns closely with their full-precision counterparts.

By adapting the hardware acceleration method of FlashAttention2 \cite{dao2023flashattention2} into our BinaryAttention kernel, we achieve a more than 100\% inference speedup over FlashAttention2 on A100 GPUs. We perform a comprehensive evaluation of BinaryAttention across vision transformers and diffusion transformers on fundamental vision tasks such as image classification, detection, segmentation, and image generation. The results demonstrate that BinaryAttention consistently  achieves or even exceeds the performance of its full-precision attention counterparts. Our work establishes a significantly efficient and effective alternative to full-precision attention, largely advancing the development of efficient transformers for visual tasks.

\section{Related Work}
\label{sec:related work}

\textbf{Attention architecture.} In response to the quadratic complexity of attention in Transformers, there have been significant attempts in redesigning the computation architecture of attentions, including linear attention \cite{katharopoulos2020transformers, wang2020linformer, yu2022metaformer, arora2024simple, yang2024parallelizing}, sparse attention \cite{beltagy2020longformer, roy2021efficient, tay2020sparse, xiao2023efficient, gao2024seerattention, yuan2025native}, and state space models \cite{gu2022efficiently, gu2024mamba, mamba2}. 
Linear attention reformulates the computing process of attention to achieve linear complexity. For instance, Katharopoulos \etal \cite{katharopoulos2020transformers} replaced the softmax operation with carefully designed kernel functions. Wang \etal \cite{wang2020linformer} proposed Linformer, which adopts an alternative design that approximates the attention computation using low-rank matrix factorization. Yang \etal \cite{yang2024parallelizing} generalized linear attention based on the gated delta rule, allowing more expressive variants while maintaining linear complexity. Sparse attention approaches reduce the complexity by limiting interactions to strategically selected token pairs, encompassing common designs such as sliding window \cite{beltagy2020longformer, roy2021efficient}, sink \cite{tay2020sparse, xiao2023efficient} and hybrid \cite{gao2024seerattention,yuan2025native} patterns. Recently, state space models (SSMs) \cite{gu2021combining, nguyen2022s4nd, gu2024mamba, mamba2, zhu2024vim} have emerged to replace attention with efficient recurrent scanning processes. Models like Mamba \cite{gu2024mamba, mamba2} demonstrated that SSMs achieve Transformer-like capability with linear complexity.

\textbf{Quantization techniques.} Quantization techniques reduce model precision to achieve acceleration, which can be divided into two categories: post-training quantization (PTQ) \cite{liu2021post, frantar2022gptq, lin2024awq, zhang2025sageattention, zhang2025sageattention2, zhang2025sageattention3} minimizes quantization error without retraining, and quantization-aware training (QAT) \cite{yang2019quantization, li2022q, liu2023llm, he2023bivit, le2023binaryvit, xiao2024binaryvit} simulates the effects of quantization during the training or fine-tuning process. Frantar \etal \cite{frantar2022gptq} presented GPTQ, which quantizes weight parameters by leveraging second-order statistics. Lin \etal \cite{lin2024awq} developed AWQ, which protects salient weights by determining the per-channel scaling factors based on the distribution of activations. Several approaches have pushed the quantization to binary space specifically for vision transformers \cite{he2023bivit, le2023binaryvit, xiao2024binaryvit}. More recent efforts have extended quantization to the computation of attention. Zhang \etal \cite{zhang2025sageattention} introduced SageAttention, which employs block-wise INT8 quantization for attention queries and keys. SageAttention2 \cite{zhang2025sageattention2} quantizes queries and keys to INT4 and computing the attention map and values in FP8. SageAttention3 \cite{zhang2025sageattention3} goes further by unifying the attention computations to FP4.

\textbf{Hardware optimization.} This strategy unleashes hardware capabilities to improve efficiency. Lefaudeux \etal \cite{xFormers2022} developed xFormers, which optimizes attention through customized and memory-efficient CUDA kernels. Dao \etal \cite{dao2022flashattention} introduced the concept of IO-aware tiled attention and proposed FlashAttention, achieving significant speedups. FlashAttention2 \cite{dao2023flashattention2} refines this through improved parallelism and warp-level partition. FlashAttention3 \cite{shah2024flashattention} further exploits Hopper GPU architecture by leveraging asynchronous communication and FP8 Tensor Core. Based on these developments, SageAttention \cite{zhang2025sageattention, zhang2025sageattention2, zhang2025sageattention3}  integrates FlashAttention with low-precision quantization, leading to significant gains in overall efficiency.

\section{Preliminaries}
\label{sec:preliminaries}

\textbf{Model quantization} \cite{gholami2022survey} aims to quantize the high-bit (\eg, FP32) weights and activations of a network model into low-bit representations (\eg, INT8) for efficient deployment. A simple approach is uniform quantization, which linearly maps the floating-point values to a discrete set of levels. For a given full-precision vector $\displaystyle \bm{x}\in \mathbb{R}^d$, this process can be described as: $\displaystyle \tilde{\bm{x}}=\lceil\bm{x}/s\rfloor+z$,  $\hat{\bm{x}}=s(\tilde{\bm{x}}-z)$,
where $\displaystyle \tilde{\bm{x}}$ is the quantized value, $\lceil\cdot\rfloor$ means rounding operation, $\displaystyle s$ is a scaling factor, $\displaystyle z$ is a zero point, and $\displaystyle \hat{\bm{x}}$ denotes the de-quantized value that approximates the original $\displaystyle \bm{x}$.  An extreme case is binary quantization \cite{pouransari2020least}, which simplifies the approximation to $\displaystyle \hat{\bm{x}}=\alpha \text{sign}(\bm{x})$ with an optimal scaling factor $\displaystyle \alpha$, where the element-wise function $\displaystyle \text{sign}(\bm{x})$ maps non-negative values to $1$ and others to $-1$.

\vspace{+2mm}
\noindent\textbf{Attention} is the cornerstone of the Transformer architecture. Given an input $\displaystyle \bm{x} \in \mathbb{R}^{N\times d}$ of length $\displaystyle N$, single head softmax attention \cite{vaswani2017attention} computes output $\displaystyle \bm{y} \in \mathbb{R}^{N\times d}$ as:
\begin{equation}
    \label{eq:standard attention}
    \bm{y}_i =\sum_{j=1}^N(\frac{\text{exp}(\bm{q}_i^T\bm{k}_j/\sqrt{d})}{\sum_{j=1}^N \text{exp}(\bm{q}_i^T\bm{k}_j/\sqrt{d})})\bm{v}_j=\sum_{j=1}^N\bm{P}_{ij}\bm{v}_j,
\end{equation}
where the query, key and value tokens $(\bm{q}_i,\bm{k}_j,\bm{v}_j\in\mathbb{R}^d)$ are generated by projecting $\bm{x}$ with learnable weight matrices. $\bm{P}_{ij}$ is the attention coefficient computed over $(\bm{q}_i,\bm{k}_j)$.

\section{BinaryAttention}
\label{method}

\begin{figure*}[!t]
    \centering
        \begin{subfigure}{0.49\textwidth}
        \centering
        \includegraphics[height=4.5cm]{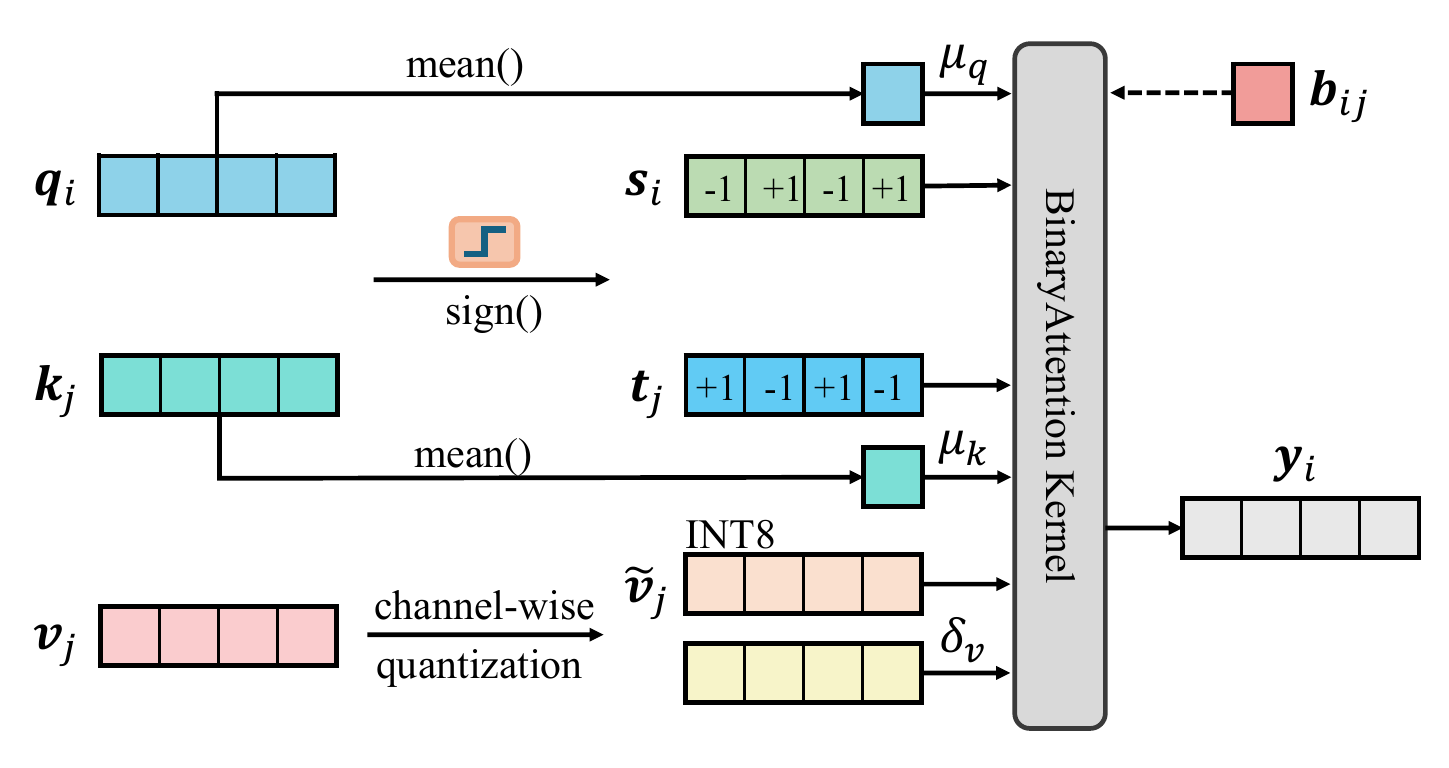}
        \caption{Overall of BinaryAttention}
        \label{fig:workflow (a)}
    \end{subfigure}
    \hfill
    \begin{subfigure}{0.24\textwidth}
        \centering
        \includegraphics[height=4.5cm]{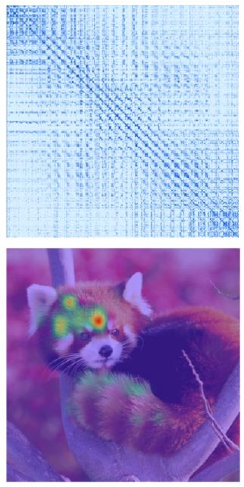}
        \caption{Standard Attention}
         \label{fig:workflow (b)}
    \end{subfigure}
    \hfill
    \begin{subfigure}{0.24\textwidth}
        \centering
        \includegraphics[height=4.5cm]{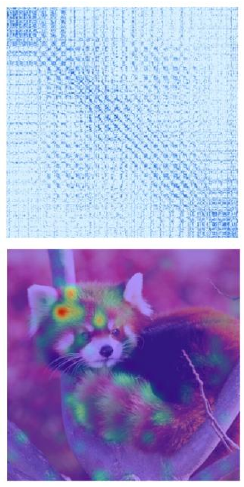}
        \caption{BinaryAttention}
        \label{fig:workflow (c)}
    \end{subfigure}
    % \vspace{-3mm}
    \caption{Overview and comparative analysis of BinaryAttention. 
    (a) The computation of BinaryAttention involves three components: converting queries and keys into scaled binary representations, applying a bias enhancement, and quantizing the attention coefficients and values.  
    Sub-figures (b) and (c) show the attention maps (top) and the corresponding activation maps (bottom) for Standard Attention and our BinaryAttention, demonstrating the comparable expressivity of BinaryAttention to Standard Attention despite 1-bit quantization.}
    \label{fig:workflow}
    \vspace{-2mm}
\end{figure*}

\subsection{Theoretical Motivation for BinaryAttention}
\label{sec:theoreticalmotivation}
We commence by establishing a theoretical bridge between standard and binary attention, demonstrating the feasibility of our approach. As shown in Eq.~(\ref{eq:standard attention}), softmax attention computes a weighted sum over the values. The attention coefficient $\bm{P}_{ij}$ is determined by the dot-product similarity $\bm{S}_{ij}=\bm{q}_i^T\bm{k}_j$ between query and key. This similarity can be interpreted from two complementary perspectives.

Suppose that the query $\bm{q}_i$ and key $\bm{k}_j$ are $L_2$ normalized, a common practice known as QKNorm \cite{henryetal2020query}. With $\displaystyle \|\bm{q}_i-\bm{k}_j\|_2^2=\|\bm{q}_i\|_2^2+\|\bm{k}_j\|_2^2-2\bm{q}_i^T\bm{k}_j$, the attention coefficient can be expressed as:
\begin{equation}
    \bm{P}_{ij}  \propto \text{exp}(-\|\bm{q}_i-\bm{k}_j\|_2^2/\tau),
\end{equation}
where $\displaystyle \tau$ is a scaling factor. This formulation reveals that softmax attention actually behaves as a distance-based metric in Euclidean space.
Alternatively, the dot-product similarity can be rewritten as $\displaystyle \bm{q}_i\bm{k}_j^T=\|\bm{q}_i\|_2\|\bm{k}_j\|_2\cos{\theta}$, where $\displaystyle \theta$ refers to the angle between the query $\displaystyle \bm{q}_i$ and key $\bm{k}_j$. With $L_2$ normalization, the attention coefficient is therefore proportional to the cosine similarity, scaled by $\displaystyle \tau$:
\begin{equation}
    \bm{P}_{ij}  \propto \text{exp}(\cos{\theta}/\tau).
\end{equation}
This shows that the attention mechanism can be interpreted as operating on the directional similarity.

With the above dual perspectives of standard attention in mind, we explore whether these relationships can be preserved under 1-bit quantization of it. 
We represent the binary counterparts of query $\bm{q}_i$ and key $\bm{k}_j$ as $\displaystyle \bm{s}_i=\text{sign}( \bm{q}_i), \bm{t}_j=\text{sign}( \bm{k}_j)\in\{-1,1\}^d$, respectively. Then, the dot-product similarity can be expressed directly in terms of the Hamming distance as $\displaystyle \bm{S}_{ij}=\bm{s}_i^T\bm{t}_j=d-2\|\bm{s}_i-\bm{t}_j\|_H$, 
where the Hamming norm $\displaystyle \|\bm{x}\|_H$ is defined as the number of non-zero entries of $\displaystyle \bm{x}$. Thus, the attention coefficient is given by:
\begin{equation}
    \bm{P}_{ij}  \propto \text{exp}(-\|\bm{s}_i-\bm{t}_j\|_H/\tau).
\end{equation}
This indicates that binary attention operates as a distance-based metric in the Hamming space, mirroring the Euclidean distance in standard attention. Beyond this, it also preserves the directional similarity. Since the dot-product $\bm{s}_i^T\bm{t}_j$ in binary domain equals $d\cos{\theta}$, the attention coefficient can be equivalently expressed as $\bm{P}_{ij} \propto \text{exp}(\cos{\theta}/\tau)$.

While the above analyzes reveal the structural parallels between binary and standard attentions in their respective spaces, a more fundamental connection exists at the statistical level. We show that binary attention preserves the covariance structure of the original queries and keys, as shown in the following \textbf{Theorem~\ref{theorem1}}.

\begin{theorem}\label{theorem1}
Consider two random variables $\bm{q},\bm{k}\in \mathbb{R}^d$. Suppose that $\bm{z}=(\bm{q}^T,\bm{k}^T)^T\in \mathbb{R}^{2d}$ is a zero-mean Gaussian vector with covariance matrix $\bm{\Sigma}$, where $\bm{\Sigma}=\left[ \begin{array}{cc} \bm{\Sigma}_{qq} & \bm{\Sigma}_{qk} \\ \bm{\Sigma}_{kq} & \bm{\Sigma}_{kk} \end{array} \right]$. Denote $\bm{D}_q=diag(\bm{\Sigma}_{qq}),\bm{D}_k=diag(\bm{\Sigma}_{kk})$ and $\bm{C}=\bm{D}_q^{-\frac{1}{2}}\bm{\Sigma}_{qk}\bm{D}_k^{-\frac{1}{2}}$. For any $\bm{s}=\text{sign}(\bm{q})$ and $\bm{t}=\text{sign}(\bm{k})$, there is:
$$
    \mathbb{E}[\bm{s}\bm{t}^T]=\frac{2}{\pi}\arcsin{\bm{C}}.
$$
\end{theorem}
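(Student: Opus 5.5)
The identity is an entrywise statement, so the plan is to reduce it to a single pair of scalar Gaussians and then apply the classical arcsine law (Sheppard's formula). Fix indices $a,b\in\{1,\dots,d\}$. The $(a,b)$ entry of $\mathbb{E}[\bm{s}\bm{t}^T]$ is $\mathbb{E}[\text{sign}(q_a)\,\text{sign}(k_b)]$. Since $\bm{z}$ is jointly Gaussian with zero mean, the pair $(q_a,k_b)$ is a bivariate zero-mean Gaussian with variances $(\bm{\Sigma}_{qq})_{aa}$ and $(\bm{\Sigma}_{kk})_{bb}$ and covariance $(\bm{\Sigma}_{qk})_{ab}$. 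The sign function is invariant under positive rescaling, so we may replace $q_a,k_b$ by $u=q_a/\sqrt{(\bm{\Sigma}_{qq})_{aa}}$ and $v=k_b/\sqrt{(\bm{\Sigma}_{kk})_{bb}}$. These are standard normals with correlation $\rho=\bm{C}_{ab}$, which is exactly where the normalization $\bm{D}_q^{-1/2}\bm{\Sigma}_{qk}\bm{D}_k^{-1/2}$ enters. We assume nondegenerate diagonal variances, which is implicit in defining $\bm{C}$. The event $\{u=0\}\cup\{v=0\}$ has probability zero, so the convention $\text{sign}(0)=1$ is irrelevant.

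The task therefore reduces to showing $\mathbb{E}[\text{sign}(u)\text{sign}(v)]=\frac{2}{\pi}\arcsin\rho$. Write $\mathbb{E}[\text{sign}(u)\text{sign}(v)]=1-2\Pr[\text{sign}(u)\neq\text{sign}(v)]$. Realize the pair as $u=\langle \bm{g},\bm{a}\rangle$ and $v=\langle \bm{g},\bm{b}\rangle$, where $\bm{g}\sim\mathcal{N}(0,I_2)$ and $\bm{a},\bm{b}$ are unit vectors in $\mathbb{R}^2$ with $\langle\bm{a},\bm{b}\rangle=\rho$. One can take $\bm{a}=(1,0)$ and $\bm{b}=(\rho,\sqrt{1-\rho^2})$. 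Because $\bm{g}$ is rotationally invariant, its direction is uniform on the circle. The signs of $u$ and $v$ differ exactly when the line orthogonal to $\bm{g}$ separates $\bm{a}$ and $\bm{b}$, which happens with probability $\theta/\pi$, where $\theta=\arccos\rho$. This is the Goemans--Williamson hyperplane argument. It gives
\[
\mathbb{E}[\text{sign}(u)\text{sign}(v)] = 1-\tfrac{2}{\pi}\arccos\rho = \tfrac{2}{\pi}\Bigl(\tfrac{\pi}{2}-\arccos\rho\Bigr) = \tfrac{2}{\pi}\arcsin\rho .
\]

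Reassembling over all $(a,b)$ yields $\mathbb{E}[\bm{s}\bm{t}^T]=\frac{2}{\pi}\arcsin\bm{C}$, where $\arcsin$ is applied entrywise.

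The only step requiring real care is the geometric probability $\theta/\pi$. To make it rigorous, write $\bm{g}=r(\cos\phi,\sin\phi)$ with $\phi$ uniform on $[0,2\pi)$ and independent of $r$. The product $\text{sign}(\cos\phi)\,\text{sign}(\cos(\phi-\theta))$ equals $-1$ on two arcs of length $\theta$ each, so $\Pr[\text{signs differ}]=2\theta/(2\pi)=\theta/\pi$. The edge cases $\rho=\pm1$ are degenerate but still agree with the formula, since then $v=\pm u$ almost surely.

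Alternatively, one could differentiate $\mathbb{E}[\text{sign}(u)\text{sign}(v)]$ in $\rho$ using Price's theorem. That gives the derivative $\frac{2}{\pi}(1-\rho^2)^{-1/2}$, which integrates from the value $0$ at $\rho=0$ to the same result. The geometric route is cleaner, so I would use it.
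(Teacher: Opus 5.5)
Your proof is correct. Its outer structure matches the paper's. You reduce to the $(a,b)$ entry, rescale $q_a$ and $k_b$ to unit variance using the positive-scale invariance of $\text{sign}$ so that their correlation is exactly $\bm{C}_{ab}$, and then prove the scalar identity $\mathbb{E}[\text{sign}(u)\,\text{sign}(v)]=\frac{2}{\pi}\arcsin\rho$. The difference is in how that scalar identity is proved.

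The paper writes $\mathbb{E}[\text{sign}(x)\,\text{sign}(y)]=4\,\mathbb{P}(x\ge 0,y\ge 0)-1$ and integrates the correlated bivariate density over the first quadrant in polar coordinates. The radial integral is immediate, but it leaves the angular integral $\frac{\sqrt{1-\rho^2}}{2\pi}\int_0^{\pi/2}\frac{d\theta}{1-\rho\sin 2\theta}$. The paper asserts its value $\frac14+\frac{1}{2\pi}\arcsin\rho$ without evaluating it.

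You instead whiten first, writing $(u,v)$ in law as two projections of an isotropic $\bm{g}\sim\mathcal{N}(0,I_2)$. The polar angle is then uniform, and the orthant probability reduces to measuring two arcs of length $\arccos\rho$, so no nontrivial integral appears.

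What your route buys:
\begin{itemize}
\item It is more elementary and fully self-contained.
\item It explains why an arcsine, i.e.\ an angle, appears at all.
\item It handles $|\rho|=1$ without a separate argument, whereas the paper's density formula is undefined there.
\item You also make explicit the nondegeneracy of the diagonal variances and the irrelevance of the $\text{sign}(0)$ convention, which the paper passes over.
\end{itemize}

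The paper's route is the standard direct Sheppard computation and needs no auxiliary representation. However, it leaves its only genuinely nontrivial step to the reader.
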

\begin{proof}
Please see \textbf{supplementary file} for the proof.
\end{proof}

\textbf{Theorem~\ref{theorem1}} implies that the outer product formed by binary queries and keys is a consistent estimate of the original covariance matrix. Crucially, the covariance matrix shares equivalent non-zero eigenspectrum with the Gram matrix, which contains all pairwise dot products between queries and keys and governs the core relational structure of standard attention. Theorem~\ref{theorem1} provides a guarantee for the performance of binary attention, ensuring its expressive capability.

\subsection{Formulation of BinaryAttention}

We now propose BinaryAttention, an effective 1-bit qk-attention method. 
As illustrated in Fig.~\ref{fig:workflow (a)}, BinaryAttention comprises the following three key components.

\textbf{Scaled binary representations}. The primary computational bottleneck in standard attention lies in the floating-point matrix multiplication between queries and keys. To address this, we replace this expensive operation with a highly efficient binary alternative. Specifically, we first quantize the query $\bm{q}_i$ and key $\bm{k}_j$ into  binary values via a scaled 1-bit quantization function:
\begin{equation}\label{eq:binaryquant}
    \bm{s}_i=\mu_q\text{sign}(\bm{q}_i),\quad \bm{t}_j=\mu_k\text{sign}(\bm{k}_j),
\end{equation}
where $\displaystyle \mu_q,\mu_k\in \mathbb{R}_{\geq 0} $ are the means of queries and keys along the token and channel axes, respectively. The dot-product similarity $\displaystyle \bm{S}_{ij}$ is computed as $\displaystyle \mu_q\mu_k\bm{s}_i^T\bm{t}_j$ in binary space, which is not only statistically aligned with its full-precision counterpart but also extremely efficient by leveraging bit-wise XNOR and popcount instructions. 

\textbf{Bias enhancement}.
While computationally efficient, the 1-bit quantization of queries and keys introduces two challenges that can degrade attention performance: a loss of magnitude information and a subsequent distribution shift in the attention scores. The binary representations project data of varying scales onto the unit hyper-sphere, sacrificing the nuanced relationships captured by the original full-precision dot products. Consequently, the softmax distribution in binary space tends to produce overly uniform attention coefficients that are difficult to distinguish salient features. To counteract this, we introduce a bias term as follows:
\begin{equation}
    \bm{S}_{ij}=\mu_q\mu_k\bm{s}_i^T\bm{t}_j/\sqrt{d} + \bm{b}_{ij},
\end{equation}
where $\displaystyle \bm{b}_{ij}\in\mathbb{R}$ is an optional bias that can be tailored to different scenarios and architectures. For instance, the bias can be a dense learnable matrix to increase the rank of dot-product similarities in binary space. Alternatively, it can be instantiated as position-sensitive or context-aware to explicitly model spatial structural and context-specific priors.

The bias term acts as a corrective measure, reintroducing the contextual or structural information back into the computation of attention coefficients, effectively avoiding the collapse of the attention distribution and enabling BinaryAttention to capture complex and long-range dependencies that are critical for visual tasks.

\textbf{Quantization of attention coefficients and values}.
To achieve a holistic acceleration of the entire attention computation, we further extend the low-bit computation to the attention coefficients and the values, which are primarily memory-bound. To address this, BinaryAttention pairs the dot-product similarities in binary space with specific 8-bit quantization schemes tailored to these two components. For the attention coefficients $\bm{P}_{ij}$, which is naturally constrained to the range $[0,1]$ by the softmax operation, we employ an unsigned 8-bit quantization with a static scale of $1/255$. For the value $\bm{v}_j$, which typically exhibits more complex statistical distributions with potential outliers across channels, we adopt a channel-wise 8-bit quantization strategy with a scale $\delta_v$. The quantization process and subsequent value aggregation are formulated as:
% \vspace{-4pt}
\begin{equation}\label{eq:int8quant}
    \tilde{\bm{P}}_{ij}=\lceil\bm{P}_{ij}\times 255 \rfloor, \ \tilde{\bm{v}}_j=\lceil\bm{v}_j/\delta_v \rfloor, \ \bm{y}_i=\sum_{j=1}^N \frac{\delta_v}{255}\tilde{\bm{P}}_{ij}\tilde{\bm{v}}_j.
\end{equation}
Here, $\displaystyle \tilde{\bm{P}}_{ij}$ and  $\tilde{\bm{v}}_j$ denote the quantized 8-bit integers. This design enables efficient integer operations while maintaining the accuracy through proper scaling factors.

\noindent \textbf{Remark.} The BinaryAttention framework, through its three core components, achieves a balance between computational efficiency and representation capabilities. The scaled binary representations enable hardware-friendly computation, the optional bias recovers discriminative ability, and the hybrid quantization ensures end-to-end acceleration. As shown in Fig.~\ref{fig:workflow (b)} and \ref{fig:workflow (c)}, the attention and activation maps from BinaryAttention reveal strikingly similar patterns with those from standard attention, focusing on analogous related regions with a high correlation. This alignment demonstrates that, even under extreme 1-bit quantization, BinaryAttention retains the content-based dynamic routing and long-range dependencies modeling capabilities of standard attention.

\subsection{Hardware-Aware Implementation}
\label{hardware}

We take advantage of the capabilities of modern GPU hardware to implement BinaryAttention, building upon the foundational principles of FlashAttention2 \cite{dao2023flashattention2} while introducing dedicated optimizations. The complete algorithm is detailed as Algorithm 1 in \textbf{supplementary file}. In particular, we utilize the fast \texttt{mma.s32.b1.b1.s32} PTX instruction (referred to as `BinaryMatmul' in Algorithm 1) of NVIDIA Tensor Cores for the similarity computations between binary queries and keys. For the attention coefficients and values multiplications, we employ the \texttt{mma.s32.u8.s8.s32} instruction (referred to as `IntMatmul' in Algorithm 1), which is optimized for mixed-precision 8-bit matrix operations. 

Our implementation maintains the memory hierarchy optimizations and block tiling strategies of FlashAttention2 \cite{dao2023flashattention2}, but adapts them specifically for the binary and low-precision context. This hardware-aware design ensures that BinaryAttention delivers practical speedups through extreme quantization of attention computation. 

\section{Experimental Results}

We conduct extensive experiments to evaluate the efficiency and effectiveness of BinaryAttention. 
Our primary competitors are FlashAttention2 \cite{dao2023flashattention2} and SageAttention \cite{zhang2025sageattention}, which share the same objective as BinaryAttention, \ie, accelerating the computation of standard attention. 
While linear attention \cite{hydra2023attn, shen2021efficient, you2022castling, cai2022efficientvit, han2023flatten, han2024bridging} and SSMs \cite{nguyen2022s4nd, zhu2024vim} also improve efficiency, they address the quadratic complexity problem by significantly changing the computing architecture of attention, which are actually orthogonal to our work. In addition, traditional model quantization methods \cite{yuan2022ptq4vit, li2023vit} mainly target linear layers rather than attention computations. Therefore, they are also complementary to our work.

In the following experiments, we take FlashAttention2 as the baseline to implement our method. 
% Due to the page limit, the ablation studies on the key components of BinaryAttention are presented in the \textbf{supplementary file}.

\begin{figure}[!t]
    % \centering
    \includegraphics[width=0.91\columnwidth]{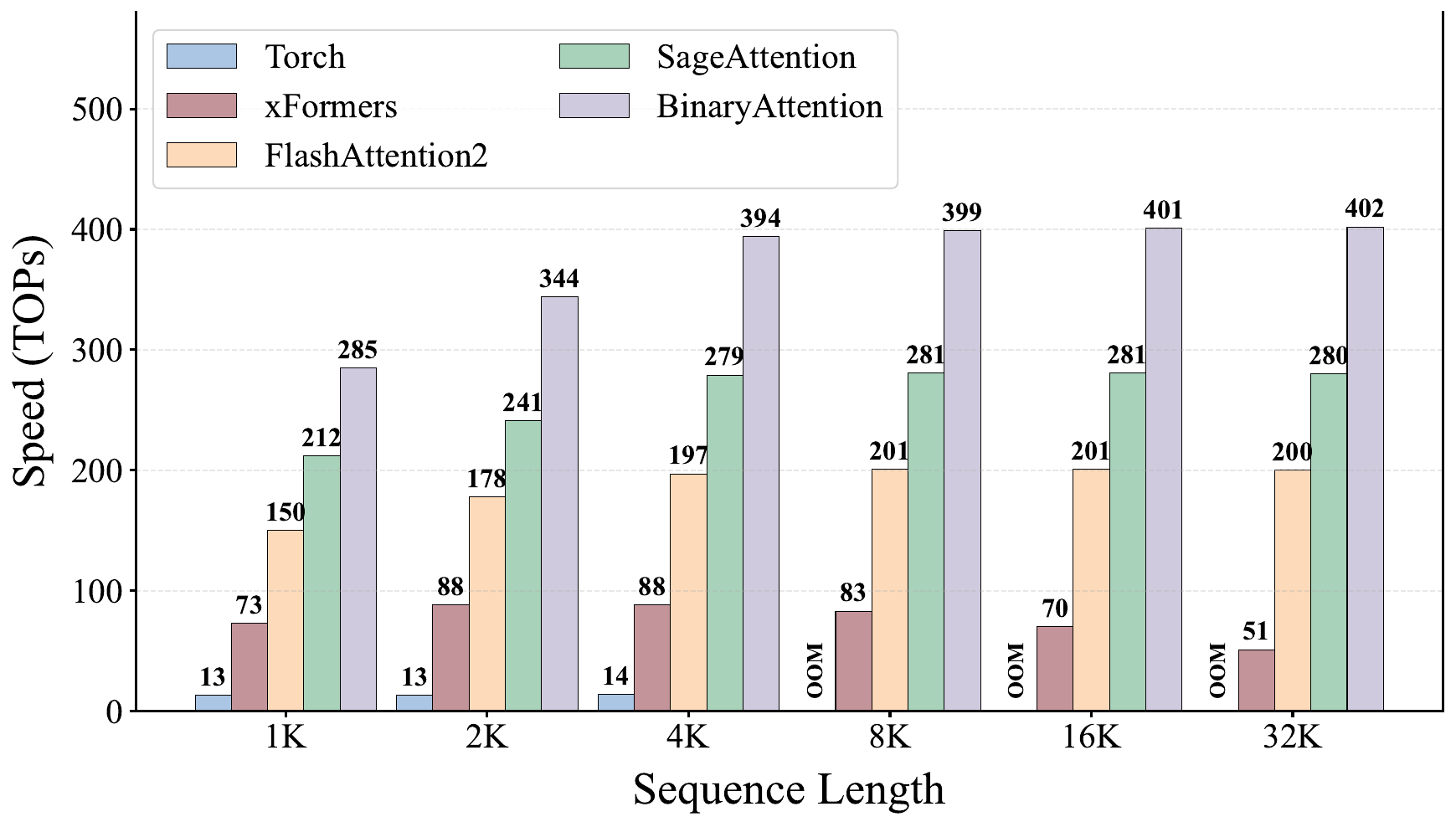}
    \vspace{-2mm}
    \caption{Kernel speed comparison on A100 GPUs. }
    \vspace{-2mm}
    \label{fig:speed comparision}
\end{figure}

\begin{figure}[!t]
    \centering
    \includegraphics[width=\columnwidth]{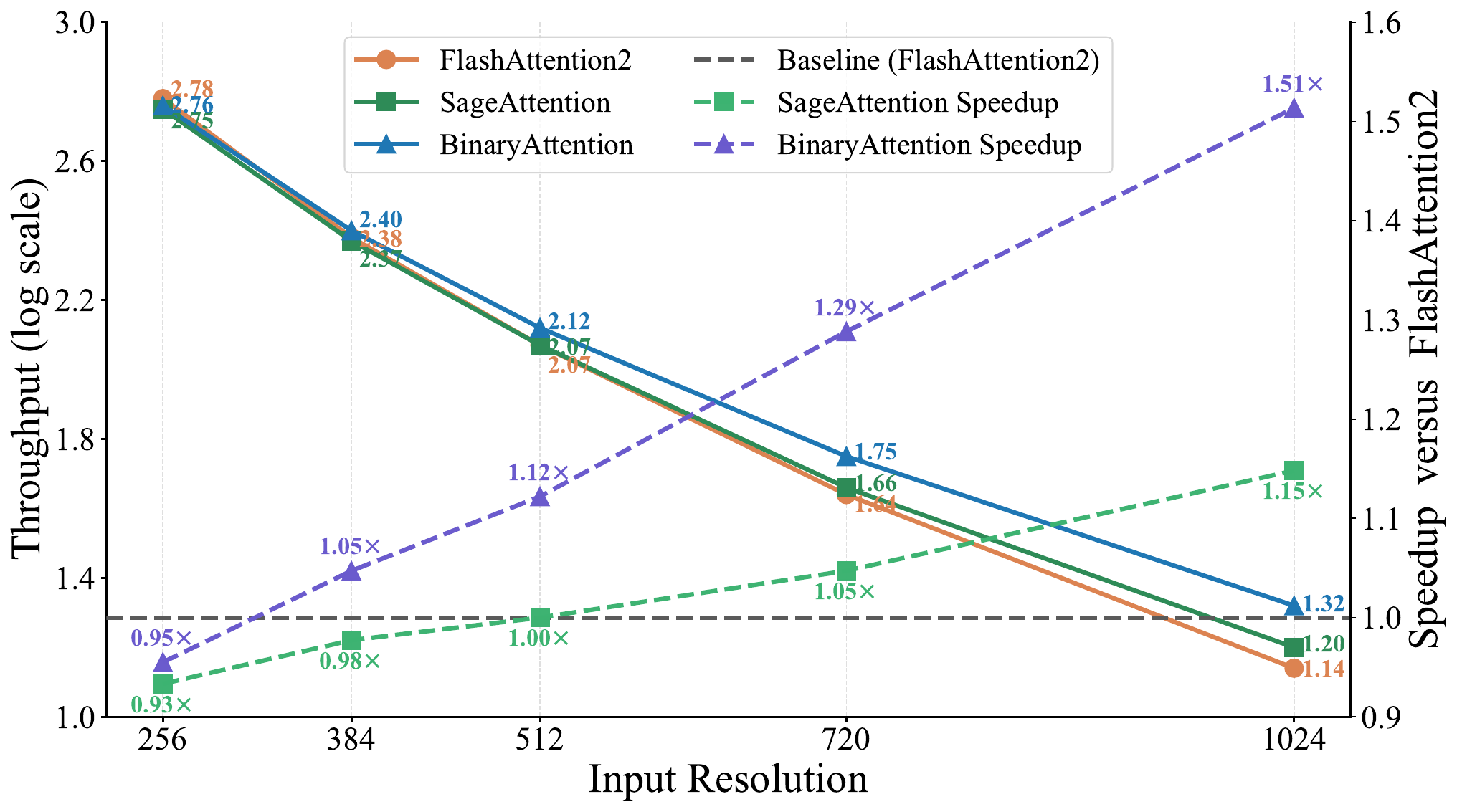}
    \vspace{-4mm}
    \caption{End-to-end throughput and speedup comparisons on A100 GPUs. ViT \cite{dosovitskiy2021an} models are used.}
    \label{fig:end to end}
    \vspace{-4mm}
\end{figure}

\subsection{Efficiency Comparison}
\label{sec:efficiency}

The two matrix multiplications dominate standard attention computation: the interaction between queries and keys  ($\displaystyle \bm{Q}\bm{K}^T$) and the aggregation of values ($\displaystyle \bm{P}\bm{V}$). The acceleration of BinaryAttention comes from the massive throughput advantages of low-precision arithmetic on modern hardware. Specifically, NVIDIA A100 Tensor Cores offer theoretical throughput of 312 TFLOPs/s for FP16, 624 TFLOPs/s for INT8, and an impressive 4992 TOPs/s for binary operations. By implementing $\displaystyle \bm{Q}\bm{K}^T$ with binary representations and $\displaystyle \bm{P}\bm{V}$ with INT8 quantization, BinaryAttention achieves theoretical speedups of 16$\times$ and 2$\times$ for these computations, respectively, yielding an overall theoretically 3.5$\times$ improvement for standard FP16 attention implementation. 

In practice, acceleration performance is constrained by memory bandwidth and I/O limitations. 
We benchmark the efficiency of BinaryAttention with several attention implementations, including Torch \cite{paszke2019pytorch}, xFormers \cite{xFormers2022}, FlashAttention2 \cite{dao2023flashattention2}, and SageAttention \cite{zhang2025sageattention}. 
Fig.~\ref{fig:speed comparision} shows the kernel speed across varying sequence lengths with a head dimension of 128 on A100 GPUs. It can be seen that BinaryAttention consistently outperforms all competing methods. Specifically, it is about 2$\times$ and 1.4$\times$ faster than FlashAttention2 and SageAttention, respectively. 

Moreover, we perform end-to-end inference and evaluate the log-scaled throughput on ViT \cite{dosovitskiy2021an} models with a patch size of 8, as presented in Fig.~\ref{fig:end to end}. While our method and SageAttention slightly slower than FlashAttention2 at low resolutions due to quantization overheads, BinaryAttention significantly outperforms all methods at higher resolutions, achieving a 1.5$\times$ speedup over FlashAttention2 and 1.3$\times$ over SageAttention at 1024$\times$1024 inputs. This further demonstrates that our BinaryAttention is compatible to modern hardware to achieve higher efficiency.

\subsection{Image Classification}

\textbf{Settings.} We first evaluate the image classification task using ImageNet-1K  \cite{deng2009imagenet}. Following DeiT \cite{touvron2021training}, we develop three variants of BinaryAttention, namely -T (tiny), -S (small) and -B (base), by substituting all standard attention modules with BinaryAttention. We follow the experimental settings in DeiT \cite{touvron2021training}, which are detailed in \textbf{supplementary file}. The models are fine-tuned with the self-distillation \cite{hinton2015distilling} strategy, where the full-precision counterparts serve as the teacher. 

We compare with quantization based methods PTQ4ViT \cite{yuan2022ptq4vit} (W8A8), I-ViT \cite{li2023vit} (W8A8) and the attention quantization method SageAttention \cite{zhang2025sageattention}. We also compare with Linear Transformers \cite{hydra2023attn, shen2021efficient, you2022castling, cai2022efficientvit, han2023flatten, han2024bridging} and SSMs \cite{nguyen2022s4nd, zhu2024vim} for reference. Following \cite{rastegari2016xnor, he2023bivit}, we count binary operations (BOPs) and floating-point operations (FLOPs) separately and report the total operations (OPs) as OPs$=$BOPs$/$64$+$FLOPs. 

\begin{table}[!t]
    \centering
    \setlength{\tabcolsep}{1.2mm}
    \resizebox{0.9\linewidth}{!}{
        \begin{tabular}{l|llccl}
        \toprule
         & Method & Reso. & \#Param. & OPs  &  Top-1 \\

        \midrule

        \multirow{8}{*}{\rotatebox{90}{Linear Transformer}}
        & Hydra-T \cite{hydra2023attn}          & $\text{224}^\text{2}$ & 6M  & 1.1G  & 68.3 \\
        & Efficient-T \cite{shen2021efficient}  & $\text{224}^\text{2}$ & 6M  & 1.1G  & 70.2 \\
        & Angular-T \cite{you2022castling}      & $\text{224}^\text{2}$ & 6M  & 1.1G  & 70.8 \\
        & Enhanced-T \cite{cai2022efficientvit} & $\text{224}^\text{2}$ & 6M  & 1.1G  & 72.9 \\

        & FLatten-T \cite{han2023flatten}       & $\text{224}^\text{2}$ & 6M  & 1.1G  & 74.1 \\
        \cline{2-6}

        & InLine-T \cite{han2024bridging}       & $\text{224}^\text{2}$ & 7M  & 1.1G  & 74.5 \\
        & InLine-S                              & $\text{288}^\text{2}$ & 17M & 5.0G  & 80.2 \\
        & InLine-B                              & $\text{448}^\text{2}$ & 24M & 17.2G & 82.3 \\
        % \cline{2-6}
        \midrule

        \multirow{4}{*}{\rotatebox{90}{SSM}}
        & S4ND-ViT-B \cite{nguyen2022s4nd}     & $\text{224}^\text{2}$ & 89M & -     & 80.4 \\
        & Vim-T \cite{zhu2024vim}              & $\text{224}^\text{2}$ & 7M  & 1.5G  & 76.1 \\
        & Vim-S                                & $\text{224}^\text{2}$ & 26M & 5.1G  & 80.3 \\
        & Vim-B                                & $\text{224}^\text{2}$ & 98M & 18.9G & 81.9 \\
        
        \bottomrule
        \toprule
        
        \multirow{21}{*}{\rotatebox{90}{Transformer}}
        
        & DeiT-T \cite{touvron2021training} & $\text{224}^\text{2}$ & 6M  & 1.2G  & 72.2 \\
        & DeiT-S                            & $\text{224}^\text{2}$ & 22M & 4.6G  & 79.8 \\
        & DeiT-B                            & $\text{224}^\text{2}$ & 87M & 17.6G & 81.8 \\
        & DeiT-B                            & $\text{384}^\text{2}$ & 87M & 55.4G & 83.1 \\
        \cline{2-6}

        & \multicolumn{5}{c}{W8A8 Quantization} \\
        & PTQ4ViT-T \cite{yuan2022ptq4vit} & $\text{224}^\text{2}$ & 6M  & 0.3G  & 71.6 \\
        & PTQ4ViT-S                        & $\text{224}^\text{2}$ & 22M & 1.2G  & 79.5 \\
        & PTQ4ViT-B                        & $\text{224}^\text{2}$ & 87M & 4.5G  & 81.5 \\
        & PTQ4ViT-B                        & $\text{384}^\text{2}$ & 87M & 14.2G & 83.0 \\
        \cline{2-6}

        & I-ViT-T \cite{li2023vit}        & $\text{224}^\text{2}$ & 6M & 0.3G   &  72.2 \\
        & I-ViT-S                         & $\text{224}^\text{2}$ & 22M & 1.2G  &  80.1 \\
        & I-ViT-B                         & $\text{224}^\text{2}$ & 87M & 4.5G  &  81.7 \\
        \cline{2-6}
        & \multicolumn{5}{c}{Attention Quantization} \\
        % \midrule
        & SageAttention-T \cite{zhang2025sageattention} & $\text{224}^\text{2}$ & 6M  & 1.2G  & 72.11 \\
        &\;+PTQ4ViT                                     & $\text{224}^\text{2}$ & 6M  & 0.4G  & 71.63 \\
        & SageAttention-S                               & $\text{224}^\text{2}$ & 22M & 4.5G  & 79.82 \\
        &\;+PTQ4ViT                                     & $\text{224}^\text{2}$ & 22M & 1.3G  & 79.38 \\
        & SageAttention-B                               & $\text{224}^\text{2}$ & 87M & 17.3G & 81.83 \\
        &\;+PTQ4ViT                                     & $\text{224}^\text{2}$ & 87M & 4.8G  & 81.58 \\
        & SageAttention-B                               & $\text{384}^\text{2}$ & 87M & 53.2G & 82.89 \\
        &\;+PTQ4ViT                                     & $\text{384}^\text{2}$ & 87M & 16.5G & 82.98  \\
        \hhline{~|-----}

        \rowcolor{blue!5} & BinaryAttention-T  & $\text{224}^\text{2}$ & 6M  & 1.1G & \textbf{72.88} \\
        \rowcolor{blue!5} &\;+PTQ4ViT          & $\text{224}^\text{2}$ & 6M  & 0.3G &          72.61 \\
        \rowcolor{blue!5} & BinaryAttention-S  & $\text{224}^\text{2}$ & 22M & 4.3G & \textbf{80.24} \\
        \rowcolor{blue!5} &\;+PTQ4ViT          & $\text{224}^\text{2}$ & 22M & 1.2G &         79.81  \\
        \rowcolor{blue!5} & BinaryAttention-B  & $\text{224}^\text{2}$ & 87M & 17.0G& \textbf{82.04} \\
        \rowcolor{blue!5} &\;+PTQ4ViT          & $\text{224}^\text{2}$ & 87M & 4.4G &         81.91  \\
        \rowcolor{blue!5} & BinaryAttention-B  & $\text{384}^\text{2}$ & 87M & 50.2G& \textbf{83.64} \\
        \rowcolor{blue!5} &\;+PTQ4ViT          & $\text{384}^\text{2}$ & 87M & 13.5G &        83.55  \\
        \bottomrule
        \end{tabular}
    }
    \caption{Comparison of image classification on ImageNet-1K.}
    \vspace{-4mm}
    \label{tab:classification}
\end{table}

\begin{table*}[!t]
    \centering
    \resizebox{0.9\linewidth}{!}{
    \setlength{\tabcolsep}{1.2mm}{
    \renewcommand\arraystretch{1.1}
        \begin{tabular}{l|cccccc|cccccc|cc}
            \toprule
            \multicolumn{15}{c}{\textbf{(a) Mask R-CNN}} \\
            Backbone & AP$^\text{b}$ & AP$^\text{b}_\text{50}$ & AP$^\text{b}_\text{75}$ & AP$^\text{b}_\text{s}$ & AP$^\text{b}_\text{m}$ & AP$^\text{b}_\text{l}$ & AP$^\text{m}$ & AP$^\text{m}_\text{50}$ & AP$^\text{m}_\text{75}$ & AP$^\text{m}_\text{s}$ & AP$^\text{m}_\text{m}$ & AP$^\text{m}_\text{l}$ & \#Param. & OPs  \\

            \midrule DeiT-T 
            & 41.11 & \textbf{62.24} & 44.71 & 23.50 & 44.00 & \textbf{56.56}   
            & 37.26 & 59.20 & 39.63 & 17.35 & 39.55 & \textbf{56.62} & 28M & 338G  \\
            SageAttention-T 
            & 41.10 & 62.21 & 44.65 & 23.52 & 44.02 & 56.43   
            & 37.26 & \textbf{59.23} & 39.62 & 17.39 & 39.57 & 56.61 & 28M & 327G  \\
            \rowcolor{blue!5} BinaryAttention-T
            & \textbf{41.29} & 62.04 & \textbf{44.91} & \textbf{23.99} & \textbf{44.62} & 55.81    
            & \textbf{37.26} & 59.10 & \textbf{39.68} & \textbf{17.71} & \textbf{39.89} & 55.85 & 28M  & 313G \\
            
            \hline DeiT-S 
            & 45.49 & 67.29 & 49.10 & 28.91 & 48.56 & 61.57 
            & 40.87 & 63.90 & 43.57 & 21.95 & 43.56 & \textbf{60.89} & 44M & 440G  \\
            SageAttention-S 
            & 45.48 & 67.23 & 49.13 & 28.87 & 48.65 & 61.62 
            & 40.85 & 63.90 & 43.59 & 21.92 & 43.52 & 60.86 & 44M & 418G  \\
            \rowcolor{blue!5} BinaryAttention-S
            & \textbf{45.86} & \textbf{67.29} & \textbf{49.77} & \textbf{30.03} & \textbf{49.16} & \textbf{61.72}    
            & \textbf{41.01} & \textbf{64.10} & \textbf{43.68} & \textbf{22.39} & \textbf{43.71} & 60.34 & 44M & 390G   \\

            \hline DeiT-B 
            & 47.99 & 69.46 & 52.07 & 31.59 & 51.11 & \textbf{64.04}    
            & 42.98 & 66.48 & 46.29 & 23.82 & 45.64 & \textbf{61.85} & 111M & 785G  \\
            SageAttention-B 
            & 47.97 & 69.46 & 52.08 & 31.64 & 51.13 & 63.97    
            & 42.99 & 66.50 & 46.29 & 23.82 & 45.65 & 61.83 & 111M & 742G  \\
            \rowcolor{blue!5} BinaryAttention-B
            & \textbf{48.28} & \textbf{69.98} & \textbf{52.58} & \textbf{31.96} & \textbf{51.68} & 63.44    
            & \textbf{43.24} & \textbf{66.75} & \textbf{46.44} & \textbf{24.60} & \textbf{46.10} & 61.03 & 111M & 685G   \\

            \toprule
    
            \multicolumn{15}{c}{\textbf{(b) Cascade Mask R-CNN}} \\
            Backbone & AP$^\text{b}$ & AP$^\text{b}_\text{50}$ & AP$^\text{b}_\text{75}$ & AP$^\text{b}_\text{s}$ & AP$^\text{b}_\text{m}$ & AP$^\text{b}_\text{l}$ & AP$^\text{m}$ & AP$^\text{m}_\text{50}$ & AP$^\text{m}_\text{75}$ & AP$^\text{m}_\text{s}$ & AP$^\text{m}_\text{m}$ & AP$^\text{m}_\text{l}$ & \#Param. & OPs \\

            \midrule DeiT-T 
            & 46.39 & 64.58 & 50.11 & 27.05 & 50.03 & \textbf{63.71}   
            & 39.90 & 61.75 & 42.91 & 19.37 & 42.75 & \textbf{59.73} & 58M  & 595G  \\
            SageAttention-T 
            & 46.45 & 64.59 & 50.10 & 27.11 & 50.09 & 63.70   
            & 39.91 & 61.70 & 42.90 & 19.39 & 42.73 & 59.68 & 58M & 584G   \\
            \rowcolor{blue!5} BinaryAttention-T
            & \textbf{46.64} & \textbf{64.79} & \textbf{50.37} & \textbf{28.07} & \textbf{50.24}& 63.14    
            & \textbf{40.12} & \textbf{62.00} & \textbf{43.05} & \textbf{20.54} & \textbf{42.94} & 59.46 & 58M & 570G    \\

            \hline DeiT-S 
            & 49.44 & 68.10 & 53.16 & 32.01 & 53.20 & 65.18   
            & 42.52 & 65.12 & 45.90 & 23.32 & \textbf{45.40} & 60.89 & 75M & 696G  \\
            SageAttention-S 
            & 49.46 & \textbf{68.16} & 53.20 & 32.07 & \textbf{53.22} & 65.14   
            & 42.51 & 65.13 & 45.93 & 23.41 & 45.38 & 60.88 & 75M & 675G  \\
            \rowcolor{blue!5} BinaryAttention-S
            & \textbf{49.63} & 68.00 & \textbf{53.65} & \textbf{32.83} & 53.00 & \textbf{65.70}    
            & \textbf{42.72} & \textbf{65.42} & \textbf{46.09} & \textbf{23.80} & 45.23 & \textbf{61.37} & 75M & 647G   \\

            \hline DeiT-B 
            & \textbf{50.21} & 68.63 & 54.53 & 33.29 & \textbf{53.43} & 65.56    
            & 43.49 & 66.25 & 47.05 & \textbf{24.69} & 46.03 & 61.05 & 141M & 1041G  \\ 
            SageAttention-B 
            & 50.20 & 68.62 & \textbf{54.55} & 33.29 & 53.39 & 65.58    
            & 43.48 & 66.27 & 47.06 & 24.55 & 45.98 & 61.06 & 141M & 999G   \\  
            \rowcolor{blue!5} BinaryAttention-B
            & 50.16 & \textbf{68.80} & 54.09 & \textbf{32.90} & 53.31 & \textbf{66.26}    
            & \textbf{43.49} & \textbf{66.28} & \textbf{47.15} & 24.28 & \textbf{46.17} & \textbf{62.23} & 141M & 941G    \\
            \bottomrule
        \end{tabular}}
    }
    \caption{Comparison of object detection and instance segmentation on COCO. OPs are computed with input resolution of 1024$\times$1024. }
    \vspace{-2mm}
    \label{tab:detection}
\end{table*}

\begin{table}[!t]
    \centering
    \setlength{\tabcolsep}{1.2mm}
    \resizebox{0.9\linewidth}{!}{
        \begin{tabular}{l|c|cc|cc}
        \toprule
        Backbone & \begin{tabular}[c]{@{}c@{}} Crop \\ size \end{tabular} & \begin{tabular}[c]{@{}c@{}} mIoU \\ (SS) \end{tabular} &  \begin{tabular}[c]{@{}c@{}} mIoU \\ (MS) \end{tabular}  & \#Param. & OPs \\ 
        \midrule
        DeiT-T                               & $\text{512}^\text{2}$ & 39.82 & 40.68 & 11M & 227G  \\ 
        SageAttention-T                      & $\text{512}^\text{2}$ & 39.82 & 40.68 & 11M & 198G  \\ 
        \rowcolor{blue!5} BinaryAttention-T  & $\text{512}^\text{2}$ & \textbf{39.93} & \textbf{40.89} & 11M & 159G  \\
        \midrule
        DeiT-S                               & $\text{512}^\text{2}$ & 44.67 & 46.01 & 43M & 744G  \\ 
        SageAttention-S                      & $\text{512}^\text{2}$ & 44.67 & \textbf{46.01} & 43M & 687G  \\ 
        \rowcolor{blue!5} BinaryAttention-S  & $\text{512}^\text{2}$ & \textbf{44.75} & 45.95 & 43M & 610G  \\
        \midrule
        DeiT-B                               & $\text{512}^\text{2}$ & 46.86 & 47.74 & 166M & 2654G  \\ 
        SageAttention-B                      & $\text{512}^\text{2}$ & 46.86 & 47.74 & 166M & 2539G  \\ 
        \rowcolor{blue!5} BinaryAttention-B  & $\text{512}^\text{2}$ & \textbf{47.76} & \textbf{48.37} & 166M & 2384G  \\
        \bottomrule
        \end{tabular}
    }
    \caption{Comparison of semantic segmentation on ADE20K. `SS' and `MS' represent single-scale and multi-scale testing, respectively. OPs are calculated with input resolution of $512 \times 2048$. }
    \label{tab:seg}
    \vspace{-4mm}
\end{table}

\begin{table*}[!t]
    \centering
    \setlength{\tabcolsep}{1.2mm}
    \resizebox{0.78\linewidth}{!}{
    \begin{tabular}{lcc|ccccc|ccccc}
    \toprule
     Method & OPs& Steps & FID$\downarrow$   & sFID$\downarrow$  & IS$\uparrow$     & Pre.$\uparrow$ & Re.$\uparrow$  & FID$\downarrow$   & sFID$\downarrow$  & IS$\uparrow$     & Pre.$\uparrow$ & Re.$\uparrow$ \\
    \midrule
    \multicolumn{3}{c|}{(cfg=1.25)} & \multicolumn{5}{c|}{DiT-S/2} & \multicolumn{5}{c}{SiT-S/2} \\
    FlashAttention2 & 6.1G & 400K & 54.73 & \textbf{10.21} & 26.83  & 0.42 & 0.57 
                                  & 46.33 & \textbf{7.89}  & 31.99 & 0.46 & 0.59 \\
    SageAttention   & 5.8G & 400K & 54.74 & 10.21 & 26.86 & 0.42 & 0.57 
                                  & 46.32 & 7.89 & 32.00 & 0.46 & 0.59 \\
    \rowcolor{blue!5}
    BinaryAttention & 5.5G & 200K & \textbf{49.76} & 10.40  & \textbf{30.14} & \textbf{0.45} & \textbf{0.57}
                                  & \textbf{41.40} & 7.99 & \textbf{36.49} & \textbf{0.48} & \textbf{0.59} \\

    \midrule
    
    \multicolumn{3}{c|}{(cfg=1.50)} & \multicolumn{5}{c|}{DiT-S/2} & \multicolumn{5}{c}{SiT-S/2} \\

    FlashAttention2 & 6.1G & 400K & 43.87 & \textbf{8.82 } & 35.16  & 0.48 & 0.55 
                                  & 36.26 & \textbf{7.09}  & 42.24  & 0.52 & 0.56 \\
    SageAttention   & 5.8G & 400K & 43.90 & 8.82 & 35.19 & 0.48 & 0.55 
                                  & 36.25 & 7.09 & 42.25 & 0.52 & 0.56 \\
    \rowcolor{blue!5}
    BinaryAttention & 5.5G & 200K & \textbf{38.96} & 8.95  & \textbf{40.12} & \textbf{0.50} & \textbf{0.56} 
                                  & \textbf{31.18} & 7.12  & \textbf{50.54} & \textbf{0.55} & \textbf{0.56} \\

    \bottomrule
    \toprule
    
    \multicolumn{3}{c|}{(cfg=1.25)} & \multicolumn{5}{c|}{DiT-XL/2} & \multicolumn{5}{c}{SiT-XL/2} \\

    FlashAttention2 & 118.6G & 7000K & \textbf{3.22}  & \textbf{5.28}  & \textbf{201.77} & \textbf{0.76} & 0.62 
                                     & \textbf{3.62}  & \textbf{5.23}  & \textbf{193.51} & \textbf{0.75} & 0.64 \\
    SageAttention   & 117.1G & 7000K & 3.24  & 5.29  & 201.72 & 0.76 & 0.62
                                     & 3.68  & 5.25  & 191.57 & 0.75 & 0.64 \\
    \rowcolor{blue!5}
    BinaryAttention & 115.0G & 4000K & 4.26 & 6.33 & 199.59 & 0.72 & \textbf{0.65}
                                     & 4.22 & 5.74 & 191.57 & 0.74 & \textbf{0.65} \\

    \midrule
    
    \multicolumn{3}{c|}{(cfg=1.50)} & \multicolumn{5}{c|}{DiT-XL/2} & \multicolumn{5}{c}{SiT-XL/2} \\

    FlashAttention2 & 118.6G & 7000K & 2.27  & 4.60  & \textbf{278.24} & \textbf{0.83} & 0.57
                                     & \textbf{2.15}  & \textbf{4.60}  & \textbf{258.09} & \textbf{0.81} & 0.60 \\
    SageAttention   & 117.1G & 7000K & 2.27 & \textbf{4.59} & 278.03 & 0.83 & 0.58  
                                     & 2.16 & 4.63 & 256.07 & 0.80 & 0.61 \\
    \rowcolor{blue!5}
    BinaryAttention & 115.0G & 4000K & \textbf{2.19}  & 5.01 & 278.03 & 0.80 & \textbf{0.61} 
                                     & 2.21 & 4.89 & 250.02 & 0.79 & \textbf{0.61} \\
    \bottomrule
    \end{tabular}
    }
    \caption{Comparison of class-conditional image generation on ImageNet 256$\times$256. } 
    \vspace{-2mm}
    \label{tab:diffusion}
\end{table*}

\vspace{+1mm}
\noindent\textbf{Results.} Tab.~\ref{tab:classification} presents the comparison result. BinaryAttention demonstrates consistent performance advantages in multiple dimensions. 
Compared with the baseline DeiT (implemented with FlashAttention2) and the recent SageAttention, BinaryAttention achieves higher accuracy with reduced computational cost. For instance, BinaryAttention-T attains a top-1 accuracy of 72.88\%, outperforming DeiT-T by 0.68\% and SageAttention-T by 0.77\% with fewer OPs. BinaryAttention-S and BinaryAttention-B achieve top-1 accuracies of 80.24\% and 82.04\% at a resolution of 224$\times$224, surpassing SageAttention by 0.42\% and 0.21\%, respectively. Notably, at higher resolution 384$\times$384, BinaryAttention-B achieves the highest accuracy of 83.64\% with 50.2G OPs, demonstrating superior performance to DeiT-B (83.1\%, 55.4G OPs) and SageAttention-B (82.89\%, 53.2G OPs). 

Compared with W8A8 quantization methods, BinaryAttention delivers superior accuracy albeit requires more OPs.  BinaryAttention-T costs 1.1G OPs, higher than PTQ4ViT-T (0.3G OPs), but achieves much higher accuracy (72.88\% vs. 71.6\%). Note that BinaryAttention can be seamlessly integrated with these quantization techniques for linear layers. Both BinaryAttention and SageAttention can achieve a substantial reduction in OPs when combined with PTQ4ViT, but BinaryAttention maintains high accuracy. Specifically, BinaryAttention-B+PTQ4ViT holds a top-1 accuracy of 83.55\% with only 13.5G OPs, which is 0.57\% higher than SageAttention-B+PTQ4ViT with 3G fewer OPs. 

We also include Linear Transformers and SSMs in the comparison just for reference. While Flatten-T, InLine-T, and Vim-T achieve  higher accuracy in some configurations, they struggle to scale up effectively. InLine-S/B relies on larger input and Vim-S/B requires significantly more parameters and OPs. In contrast, BinaryAttention maintains robust performance without architecture changes. 

The comprehensive evaluations presented above highlight the effectiveness of BinaryAttention in maintaining the expressive capability of standard attention while achieving substantial efficiency improvement.

\subsection{Object Detection and Instance Segmentation}

\textbf{Settings.} We then evaluate object detection and instance segmentation tasks with COCO 2017 dataset \cite{lin2014microsoft} and Detectron2 \cite{wu2019detectron2} library. Following ViTDet \cite{li2022exploring}, we apply Mask R-CNN \cite{he2017mask} and Cascade Mask R-CNN \cite{cai2018cascade} as detector heads, with ImageNet-1K pre-trained BinaryAttention-T/S/B as backbones. During training, we employ the AdamW optimizer with beta set to (0.9, 0.999), momentum of 0.9 and a batch size of 64. The initial learning rate is set to 0.001 with a weight decay of 0.1. A linear learning rate decay is adopted with a warm-up of 250 iterations. 

\vspace{+1mm}
\noindent\textbf{Results}. The results are reported in Tab.~\ref{tab:detection}. 
BinaryAttention shows competitive performance with lower computational cost. 
For the Mask R-CNN head, BinaryAttention-T achieves 0.18 higher box mAP than DeiT-T with the same mask mAP of 37.26, while reducing OPs by 25G compared to DeiT-T and 14G to SageAttention-T. BinaryAttention-S exceeds DeiT-S by 0.37 in box mAP and 0.14 in mask mAP, with significant gains on small-size objects, from 28.87 to 30.03 in small box mAP. BinaryAttention-B maintains the superior performance to DeiT-B, while performing slightly lower in large-size objects. 
Similar observations can be made in the Cascade Mask R-CNN head. BinaryAttention-T and -S achieve consistent gains over their DeiT counterparts, with -S reaching a box mAP of 49.63 and a mask mAP of 42.72. Furthermore, BinaryAttention-B delivers a more favorable accuracy and efficiency trade-off, matching its full-precision counterpart with a 10\% reduction in OPs. 

\subsection{Semantic Segmentation}

\textbf{Settings.} We further evaluate BinaryAttention on semantic segmentation using ADE20K \cite{zhou2019semantic} and MMSegmentation \cite{mmseg2020}. Following \cite{fang2024eva}, we adopt the widely used UPerNet \cite{xiao2018unified} as the segmenter and our pre-trained models as backbones. We train the models for 60K iterations with a batch size of 32, using the AdamW optimizer with a weight decay of 0.01, and an initial learning rate of $\text{6}\times\text{10}^\text{-5}$ with a linear learning rate decay, following a warm-up of 1500 iterations. 

\begin{figure}[!t]
    % \centering
    \includegraphics[width=\columnwidth]{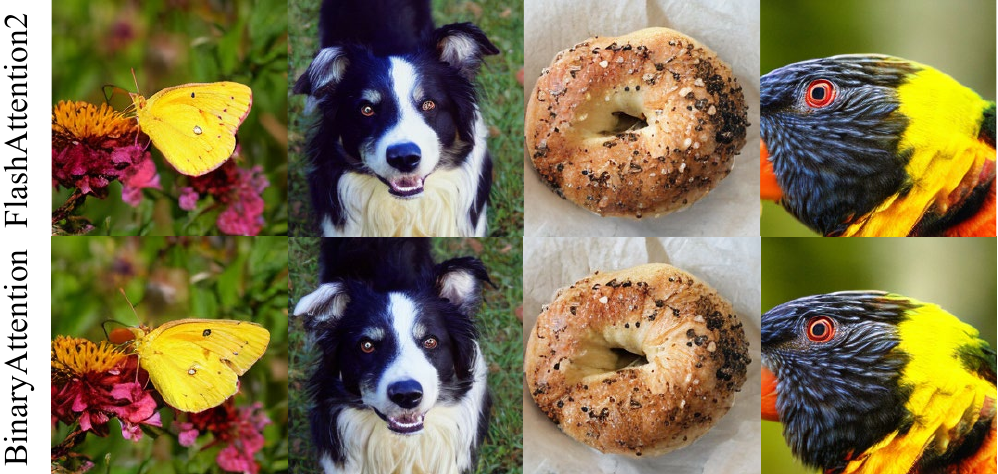}
    \caption{Qualitative comparison of generated image by DiT-XL/2 (cfg=1.50) using FlashAttention2 and BinaryAttention. }
    \label{fig:qualitative comparison}
    \vspace{-4mm}
\end{figure}

\vspace{+1mm}
\noindent\textbf{Results.} As shown in Tab.~\ref{tab:seg}, BinaryAttention achieves impressive performance on the semantic segmentation task. For instance, BinaryAttention-T achieves a single-scale mIoU of 39.93 and a multi-scale mIoU of 40.89 with fewer OPs, outperforming DeiT-T by 0.11 and 0.21 mIoU, respectively. BinaryAttention-S is slightly higher than DeiT-S in single-scale mIoU and nearly identical to the baseline in multi-scale testing. BinaryAttention-B delivers a considerable improvement, achieving 47.76 single-scale mIoU and 48.37 multi-scale mIoU, exceeding DeiT-B by 0.90 and 0.63 mIoU, respectively, while reducing computational cost by 270G OPs. These results confirm that BinaryAttention excels at complex scene understanding while preserving the fine-grained details essential for semantic segmentation.

\subsection{Image Generation}
\label{sec:image generation}

\textbf{Settings.} Finally, we explore the applicability of BinaryAttention to class-conditional image generation task. Following Diffusion Transformers (DiT) \cite{peebles2023scalable} and Scalable Interpolant Transformers (SiT) \cite{ma2024sit}, we replace their standard attention modules with our BinaryAttention. We train all models on ImageNet at $\text{256}\times\text{256}$ image resolution, with 200K iterations for small-size models and 4000K for XL variants, using the original training configurations without any hyperparameter changes. The models are initialized with the pre-trained DiT and SiT models. During evaluation, we generate 50K images using 250 sampling steps, applying the DDPM scheduler to DiT and the ODE solver to SiT. Fr\'echet Inception Distance (FID) \cite{heusel2017gans}, sliding FID (sFID) \cite{nash2021generating}, Inception Score (IS) \cite{salimans2016improved} and Precision and Recall \cite{kynkaanniemi2019improved} are used to evaluate the generative performance. 

\vspace{+1mm}
\noindent\textbf{Results.} The results are summarized in Tab.~\ref{tab:diffusion}. For DiT-S/2 and SiT-S/2 models, BinaryAttention shows substantial improvements over both FlashAttention2 and SageAttention while having fewer OPs. It achieves significantly better FID with different classifier-free guidance (cfg) scales than the baseline models at 400K steps. It also demonstrates higher IS and Precision. For XL models, BinaryAttention shows slightly lower performance when using a smaller cfg scale at 4000K steps, but it matches or even exceeds FlashAttention2 and SageAttention with a higher cfg (1.50), achieving the lowest FID of 2.19 for DiT-XL/2. Qualitative comparison in Fig.~\ref{fig:qualitative comparison} further demonstrates that BinaryAttention achieves a generation quality on par with full-precision models, producing images that are detailed and structurally consistent.

\subsection{Ablation Study}
\label{sec:ablation}

\begin{table}[!t]
    \centering
    \setlength{\tabcolsep}{1.2mm}
    \resizebox{0.84\linewidth}{!}{
        \begin{tabular}{ccc|ccc}
        \toprule
        \multirow{2}*{Scale} & \multirow{2}*{Bias} & \multirow{2}*{Distillation} & DeiT-T  & DeiT-S & DeiT-B \\
        &  &  & \multicolumn{3}{c}{Top-1 Accuracy}   \\ 
        \midrule
        \multicolumn{3}{c|}{Baseline (full-precision)} & 72.2 & 79.8 & 81.8 \\
        \midrule
        \ding{55} & \ding{55} & \ding{55} & 71.95 & 79.59 & 81.10\\
        \ding{51} & \ding{55} & \ding{55} & 72.42 & 79.81 & 81.33 \\
        \ding{51} & \ding{55} & \ding{51} & 72.44 & 79.97 & 81.99 \\
        \ding{51} & \ding{51} & \ding{51} & 72.88 & 80.24 & 82.04 \\

        \bottomrule
        \end{tabular}
    }
    \caption{Ablation studies on BinaryAttention for the scaled binary representations, bias enhancement and self-distillation strategy on ImageNet-1K benchmark, using DeiT architectures. }
    \label{tab:ablation:1}

\end{table}

We first conduct a series of ablation studies to analyze the core components of BinaryAttention, including scaled binary representations, bias enhancement, and the self-distillation strategy. The experiments are performed on the ImageNet-1K \cite{deng2009imagenet} benchmark by using DeiT \cite{touvron2021training} architectures. The results are summarized in Tab.~\ref{tab:ablation:1}.

\vspace{+1mm}
\noindent \textbf{Scaled Binary Representations.} We evaluate the role of scaling factors in binary representations. 
When scaling is not applied, BinaryAttention shows performance drop across all models, with top-1 accuracy decreased by $-$0.25\%, $-$0.21\%, and $-$0.70\% for DeiT-T, -S, and -B, respectively. 
Introducing scaling factors effectively solves this issue by minimizing the quantization error, with DeiT-T even exceeding its full-precision baseline (72.42\% vs. 72.2\%), demonstrating that proper scaling is essential for preserving representational capabilities in binary space.

\vspace{+1mm}
\noindent \textbf{Bias Enhancement.} We simply employ a learnable relative position bias \cite{liu2021swin} as the bias term, which exhibits distinct effects across model scales. It provides an accuracy gain of 0.44\% and 0.27\%  for DeiT-T and -S, respectively, while offering a slight improvement for DeiT-B, from 81.99\% to 82.04\%. This discrepancy stems from the relationship between model capacity and the expressive power of binary representations. For smaller models, the limited dimension constrains the diversity of attention patterns, making them more susceptible to distribution collapse. The bias term effectively mitigates this by introducing additional contextual or structural information. For larger models, higher-dimensional binary representations naturally preserve richer similarity structures, yielding more modest gains.

\vspace{+1mm}
\noindent \textbf{Self-distillation Strategy.} We investigate the role of self-distillation, which  slightly improves DeiT-T and -S models but significantly boosts the accuracy of DeiT-B by 0.66\%. This improvement suggests that self-distillation effectively counteracts the distribution shift introduced by quantization errors while encouraging sign-aligned similarity between binary representations and its full-precision counterparts.

\begin{table}[!t]
    \centering
    \setlength{\tabcolsep}{1.2mm}
    \resizebox{0.82\linewidth}{!}{
        \begin{tabular}{lcccc}
        \toprule
         & CosSim & Relative L1  & RMSE  & Precision  \\

        \midrule
        Layer (0)  & 0.9186 & 0.4840 & 0.4165 & 0.7716 \\
        Layer (6)  & 0.8740 & 0.7353 & 0.5084 & 0.7301  \\
        \bottomrule
        \end{tabular}
    }
    \caption{Attention pattern comparison between FlashAttention2 and BinaryAttention by DeiT-B on ImageNet-1K validation set.}
    \label{tab:ab:patt}
    \vspace{-4mm}
\end{table}

\begin{table}[!t]
    \centering
    \setlength{\tabcolsep}{1.2mm}
    \resizebox{0.94\linewidth}{!}{
        \begin{tabular}{lc|cc}
        \toprule
        Method &   Top-1 & Mem. (512) & Mem. (1024)  \\
         \midrule
        FlashAttention2        & 72.2   & 1705M & 5304M \\

        SageAttention          & 72.11  & 1705M & 5304M \\
        
        BinaryAttention (den.) & 72.88  & 3246M & 29904M \\

        BinaryAttention (dec.) & 72.97  & 1706M & 5307M \\
        \bottomrule
        \end{tabular}
    }
    \caption{Memory comparison by DeiT-T using FlashAttention2, SageAttention and BinaryAttention at resolutions of 512 and 1024.}
    \label{tab:ab:mem}
\end{table}

\begin{table}[!t]
    \centering
    \setlength{\tabcolsep}{1.2mm}
    \resizebox{0.99\linewidth}{!}{
        \begin{tabular}{ccccc}
        \toprule
        FlashAttention2  & SageAttention  & BinaryAttention  & \begin{tabular}[c]{@{}c@{}} Quant \\ Q\&K \end{tabular}  & \begin{tabular}[c]{@{}c@{}} Quant \\ V \end{tabular}  \\
        \midrule
        175.3ms & 124.6ms & 88.2ms & 2.8ms & 1.9ms \\
        \bottomrule
        \end{tabular}
    }
    \caption{Latency of attention kernels and quantization components measured on A100 GPUs.}
    \label{tab:ab:quant}
    \vspace{-4mm}
\end{table}

\vspace{+1mm}
\noindent \textbf{Attention Pattern Fidelity.}
We further analyze whether BinaryAttention preserves the original attention dynamics. We use Cosine Similarity, Relative L1 Distance, RMSE, and Precision as evaluation metrics, where Precision measures the accuracy of matching the top 100 most attended tokens. As shown in Tab.~\ref{tab:ab:patt}, BinaryAttention maintains high consistency with full-precision attention on ImageNet-1K validation set, with cosine similarity above 0.87 and precision around 0.75, demonstrating that BinaryAttention effectively preserves key relational patterns and structural relationships.

\vspace{+1mm}
\noindent \textbf{Memory and Quantization Overhead.} 
Finally, we report the memory footprint in Tab.~\ref{tab:ab:mem} and the quantization overhead in Tab.~\ref{tab:ab:quant}. BinaryAttention incurs extra memory primarily from the bias term. With a dense bias, memory grows rapidly with resolution, and with a decomposable bias, \eg, a sum over spatial directions, the overhead becomes almost negligible. Meanwhile, the quantization cost is modest, requiring 2.8ms for query and key and 1.9ms for value (4.7ms in total), which accounts for about 5\% of the BinaryAttention kernel.

\section{Conclusion}
\label{sec:conclusion}

We presented BinaryAttention, a simple yet accurate and efficient 1-bit qk-attention for vision and diffusion transformers. By establishing theoretical guaranties that token similarity persists in binary space, we incorporated scaled binary representations, bias enhancement, and hybrid quantization into standard attention, achieving 2$\times$ speedup over FlashAttention2. Extensive experiments on image classification, detection, segmentation and generation validated that BinaryAttention matched or even surpassed full-precision attention with only 1-bit representations, demonstrating its strong potential for implementing ultra-low-precision inference deployment in practical vision tasks without compromising performance. 

\vspace{+1mm}
\noindent\textbf{Limitations.} Despite significant acceleration for computing $\bm{Q}\bm{K}^T$ similarity, the $\bm{P}\bm{V}$ multiplications employ more conservative quantization, leaving room for greater end-to-end efficiency. Furthermore, our method currently focuses on optimizing attention computations specifically, leaving the complementary potential of jointly quantizing other components like MLP layers for future investigation.

{
    \small
    \bibliographystyle{ieeenat_fullname}
    \bibliography{reference}
}

% WARNING: do not forget to delete the supplementary pages from your submission 

\twocolumn[
    \centering
    \Large
    \textbf{\thetitle}\\
    \vspace{0.5em}Supplementary Material \\
    \vspace{1.0em}
] %< twocolumn

\noindent In this supplementary file, we provide following materials:

\vspace{+1mm}
\begin{itemize}[leftmargin=1.0em]
\item[\ref{sec:suppl:proof}] Proof of Theorem 1 (referring to Sec.~\ref{sec:theoreticalmotivation} in the main paper);
\item[\ref{sec:suppl:alg}] Algorithm of BinaryAttention (referring to Sec.~\ref{hardware} in the main paper); 
% \item[\ref{sec:suppl:ablation}] Detailed ablation studies of BinaryAttention (referring to Sec. 5 in the main paper); 
\item[\ref{sec:suppl:setting}] Experimental details in classification (referring to Sec.~\ref{sec:efficiency} in the main paper); 
\item[\ref{sec:suppl:visual}] More qualitative comparisons (referring to Sec.~\ref{sec:image generation} in the main paper).
\end{itemize}

\appendix
\section{Proof of Theorem 1}
\label{sec:suppl:proof}

\begin{proof}
Consider the element  $(i,j)$ of the matrix $\bm{s}\bm{t}^T$:
$$
[\bm{s}\bm{t}^T]_{ij}=\bm{s}_i\bm{t}_j=\text{sign}(\bm{q}_i)\text{sign}(\bm{k}_j).
$$
Since $\bm{q}$ and $\bm{k}$ are assumed to be jointly Gaussian with zero-mean, the pair $(\bm{q}_i,\bm{k}_j)$ is also jointly Gaussian. 
We have:
\begin{align*}
\Var(\bm{q}_i)=\bm{\Sigma}_{qq}[i,i], &\ \Var(\bm{k}_j)=\bm{\Sigma}_{kk}[j,j], \\
\Cov(\bm{q}_i,\bm{k}_j)&=\bm{\Sigma}_{qk}[i,j].
\end{align*}

Then the correlation of $\bm{q}_i$ and $\bm{k}_j$ is given by:
$$
\rho_{ij}=\frac{\bm{\Sigma}_{qk}[i,j]}{\sqrt{\bm{\Sigma}_{qq}[i,i]\bm{\Sigma}_{kk}[j,j]}}=\bm{C}_{ij}.
$$

Let $x=\bm{\Sigma}^{-\frac{1}{2}}_{qq}[i,i]\bm{q}_i$ and $y=\bm{\Sigma}^{-\frac{1}{2}}_{kk}[j,j]\bm{k}_j$, then the two variables $x,y$ are standard Gaussian with correlation $\rho_{ij}$, and the joint density can be expressed as:
$$
p(x,y)=\frac{1}{2\pi\sqrt{1-\rho_{ij}}}\text{exp}\left(-\frac{x^2-2\rho_{ij}xy+y^2}{2(1-\rho_{ij}^2)}\right).
$$
We now calculate the expectation of $\text{sign}(x)\text{sign}(y)$, where
\[
\text{sign}(x)\text{sign}(y) =
\begin{cases}
+1 & \text{if } x\geq 0, y\geq0 \ \text{or }\ x \leq 0, y \leq 0 \\
-1 & \text{if } x\geq 0, y < 0 \ \text{or }\ x < 0, y \geq 0 \\
\end{cases}.
\]

By the symmetry of standard Gaussian, we have
$$
\mathbb{E}[\text{sign}(x)\text{sign}(y)]=4\mathbb{P}(x\geq0,y\geq 0)-1.
$$
Since
{\small
\begin{align*}
&\mathbb{P}(x\geq0,y\geq 0)=\int_{0}^{\infty}\int_{0}^{\infty} p(x,y)dxdy  \xLeftrightarrow[y=R\sin{\theta}]{x=R\cos{\theta}}\\
& \frac{1}{2\pi\sqrt{1-\rho_{ij}}}\int_{0}^{\frac{\pi}{2}}\int_{0}^{\infty}\text{exp}\left( -\frac{R^2(1-\rho_{ij}\sin{2\theta})}{2(1-\rho_{ij}^2)}\right)RdRd\theta \\
&=\frac{1}{2\pi}\arcsin{\rho_{ij}}+\frac{1}{4},
\end{align*}
}
we have: 
$$
\mathbb{E}[\text{sign}(x)\text{sign}(y)]=\frac{2}{\pi}\arcsin{\rho_{ij}}=\frac{2}{\pi}\arcsin{\bm{C}_{ij}}.
$$
Note that the $\text{sign}(\cdot)$ function is scale-invariant for any strictly positive scales, we can ensure that
$$
\mathbb{E}[\text{sign}(\bm{q}_i)\text{sign}(\bm{k}_j)] = \mathbb{E}[\text{sign}(x)\text{sign}(y)]=\frac{2}{\pi}\arcsin{\bm{C}_{ij}}. \\
$$
Since it holds for all $i,j=1,\dots,d$, there is:
$$
    \mathbb{E}[\bm{s}\bm{t}^T]=\frac{2}{\pi}\arcsin{\bm{C}}.
$$
\end{proof}

\vspace{-3mm}
\begin{algorithm}[!b]
    % \label{algorithm1}
    % \footnotesize 
    % \scriptsize
    % \vspace{-15pt}
    \small
    \caption{Implementation of BinaryAttention}
    % \SetAlgoCaptionLayout{small}
    \label{alg:binaryattn}
    \KwIn{Matrices $\bm{Q},\bm{K},\bm{V}\in\mathbb{R}^{N\times d}$, bias $\bm{B}\in\mathbb{R}^{N\times N}$, block size $B_r,B_v$ .}
    \KwOut{Matrices $\bm{O}\in\mathbb{R}^{N\times d}$ .}
    \textbf{Processing:} 
    $(\mu_q,\bm{S}) \gets \phi(\bm{Q}),(\mu_k,\bm{T}) \gets \phi(\bm{K}), (\delta_v,\tilde{\bm{V}}) \gets \psi(\bm{V})$ \Comment*[r]{Quantization by Eq.(5) and Eq.(7)}
    % Initialize $\bm{O}=(0)_{N\times d}$, $l=(0)_{N}$, $m=(-\infty)_{N}$ \; 
    Divide $\bm{S},\bm{O}$ into $ T_r:=\left \lceil {N}/{B_r} \right\rceil$ blocks $\{\bm{S}_i\}$ and $\{\bm{O}_i\}$ \;
    Divide $\bm{T},\tilde{\bm{V}}$ into $ T_v:=\left \lceil {N}/{B_v} \right\rceil$ blocks $\{\bm{T}_j\}$ and $\{\tilde{\bm{V}}_j\}$ \;
    Divide $\bm{B}$ into $ T_r\times T_v$ blocks $\{\bm{B}_{ij}\}$ \Comment*[l]{if bias}
    % \Comment*[r]{If bias}

    \For{$i=1$ \KwTo $T_r$}{
        Load block $\bm{S}_i$ from HBM to SRAM \; % of the GPU
        Initialize $\bm{O}_{i,0}=(0)_{B_r\times d}, l_{i,0}=(0)_{B_r}, m_{i,0}=(-\infty)_{B_r}$ \;
        \For{$j=1$ \KwTo $T_v$}{
            Load blocks $\bm{T}_j,\tilde{\bm{V}}_j,\bm{B}_{ij}$ from HBM to SRAM \;
            % $\bm{S}_{ij} \gets \textcolor{blue}{\text{BinaryMatmul}}(\bm{S}_i,\bm{T}_j)\times \mu_q \times \mu_k$ \;
            $\bm{S}_{ij} \gets {\text{BinaryMatmul}}(\bm{S}_i,\bm{T}_j)\times \mu_q \times \mu_k$ \;
            $\bm{S}_{ij} \gets \bm{S}_{ij} +\bm{B}_{ij}$ \Comment*[l]{if bias}
            $m_{ij} \gets \text{max}(m_{i,j-1},\text{rowmax}(\bm{S}_{ij}))$ \;
            $\hat{\bm{P}}_{ij} \gets \text{exp}(\bm{S}_{ij}-m_{ij})$ \;
            $l_{ij} \gets e^{m_{i,j-1}-m_{ij}}l_{i,j-1}+\text{rowsum}(\hat{\bm{P}}_{ij})$ \;
            % $\bm{O}_{ij} \gets \textcolor{blue}{\text{IntMatmul}}(\hat{\bm{P}}_{ij}\times255,\tilde{\bm{V}}_j)$\;
            $\bm{O}_{ij} \gets {\text{IntMatmul}}(\hat{\bm{P}}_{ij}\times255,\tilde{\bm{V}}_j)$ \;
            $\bm{O}_{ij} \gets \text{diag}(e^{m_{i,j-1}-m_{ij}})^{-1}\bm{O}_{i,j-1}+\bm{O}_{ij}$ \;
        }
        $\bm{O}_i \gets \text{diag}(l_{i,Tv})^{-1}\bm{O}_{i,T_v}/255\times \delta_v$ \;
        Write $\bm{O}_i$ \;

    }
    \Return $\bm{O}=\{\bm{O}_i\}$ .
    % \vspace{-15pt}
\end{algorithm}

\section{Algorithm of BinaryAttention}
\label{sec:suppl:alg}

Our implementation of BinaryAttention is built upon the fundamental principles of FlashAttention2 \cite{dao2023flashattention2} and SageAttention \cite{zhang2025sageattention} while introducing specialized optimizations for binary and low-precision computations. The complete algorithm is presented in Algorithm~\ref{alg:binaryattn}.

\section{Experimental Details in Classification}
\label{sec:suppl:setting}
\textbf{Settings.} We benchmark BinaryAttention on ImageNet-1K \cite{deng2009imagenet} dataset. Following the experimental configurations of DeiT \cite{touvron2021training}, we employ the AdamW optimizer for 300 epochs with beta set to (0.9, 0.999), momentum of 0.9 and a batch size of 1024. An initial learning rate of $\text{10}^{-\text{4}}$, a minimum learning rate of $\text{10}^{-\text{5}}$ and a weight decay of 0.02 are used. The learning rate follows a cosine annealing schedule with a warm-up of 5 epochs. We include commonly used augmentation and regularization strategies, consistent with the training of DeiT. The drop path rate is set to 0.1 for all BinaryAttention variants. Before training, the models are initialized with full-precision pre-trained weights. We utilize the self-distillation strategy with the full-precision counterpart as teacher, and implement quantization-aware training with Straight-Through Estimators (STE) \cite{bengio2013estimating}. For the input resolution of $\text{384}\times \text{384}$, we continue to fine-tune the models for 30 epochs, with a batch size of 512, a constant learning rate of $\text{10}^{-\text{5}}$, and a weight decay of $\text{10}^{-\text{8}}$.

\begin{table}[!ht]
    \centering
    % \vspace{-3.2mm}
    \setlength{\tabcolsep}{1.2mm}
    \resizebox{0.60\linewidth}{!}{
        \begin{tabular}{cccc}
        \toprule
        Epochs & DeiT-T  & DeiT-S  & DeiT-B  \\

        \midrule 
        Baseline & 72.2 & 79.8 & 81.8 \\
        \hline
        100  & 71.98 & 79.44 & 81.80  \\
        300  & 72.44 & 79.97 & 81.99   \\        
        \bottomrule
        \end{tabular}
    }
    \caption{Top-1 accuracy of DeiT models using BinaryAttention without bias at 100 and 300 fine-tuning epochs.}
    \label{tab:ab:epochs}
    \vspace{-2mm}
\end{table}

\begin{figure}[!b]
    % \centering
    \includegraphics[width=\columnwidth]{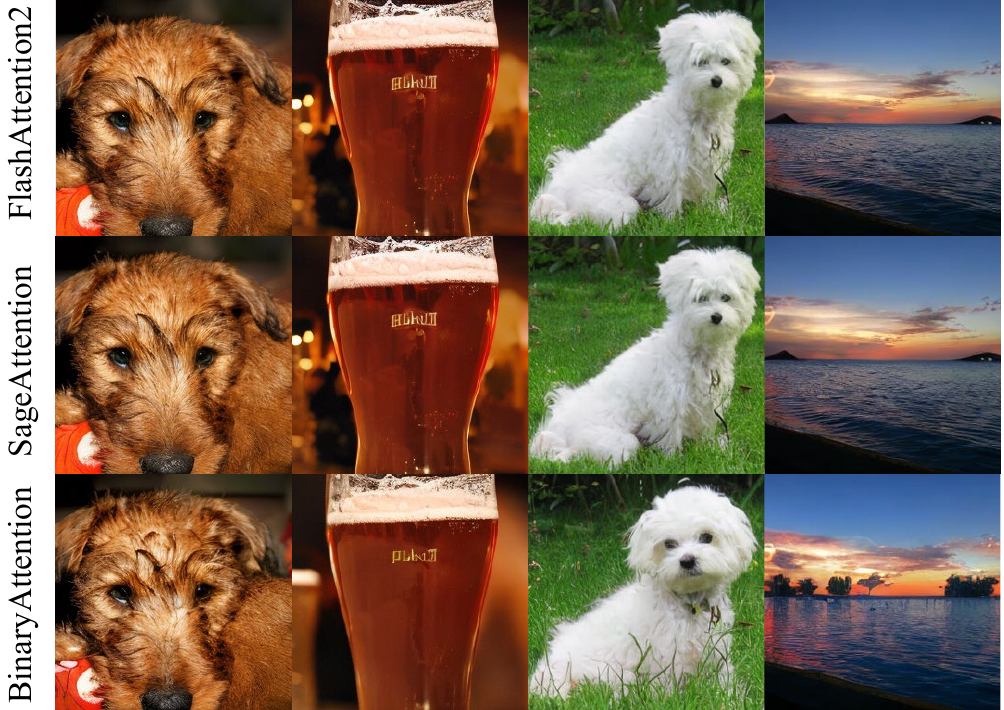}
    \caption{More qualitative comparisons of generated image by DiT-XL/2 (cfg=1.50) using FlashAttention2, SageAttention and BinaryAttention. }
    \label{fig:suppl:qualitative comparison}
\end{figure}

\vspace{+2mm}
\noindent \textbf{Tuning Cost.} In extreme low-bit quantization, fine-tuning is a standard and necessary step to bridge performance gaps. While training-free methods prioritize convenience, their performance is strictly limited by the baseline, whereas BinaryAttention raises the accuracy. For the best performance, we employ a fine-tuning schedule of 300 epochs, which is consistent with common practice in low-bit approaches such as BiViT \cite{he2023bivit}. We further report the performance at different fine-tuning epochs in Tab.~\ref{tab:ab:epochs}. It can be seen that with 100 fine-tuning epochs, BinaryAttention already nearly matches the baseline, with Top-1 accuracy gaps of 0.22/0.36/0.00 on DeiT-T/S/B, respectively. Extending fine-tuning to 300 epochs not only recovers the baseline performance but yields a modest improvement.

\section{More Qualitative Comparisons}
\label{sec:suppl:visual}

Fig.~\ref{fig:suppl:qualitative comparison} provides more qualitative comparisons of FlashAttention2, SageAttention, and BinaryAttention, showing additional images generated by the DiT-XL/2 model (cfg=1.50). 
We can see that SageAttention and FlashAttention2 produce nearly identical images, while BinaryAttention produces slightly different content but maintains competitive generation quality with sufficient details. 

\end{document}